\newcommand{\cmark}{\ding{51}}%
\newcommand{\xmark}{\ding{55}}%
\newcommand{\eat}[1]{}
\newtheorem{theorem}{Theorem}
\DeclareFontFamily{U}{mathx}{\hyphenchar\font45}
\DeclareFontShape{U}{mathx}{m}{n}{
      <5> <6> <7> <8> <9> <10>
      <10.95> <12> <14.4> <17.28> <20.74> <24.88>
      mathx10
      }{}
\DeclareSymbolFont{mathx}{U}{mathx}{m}{n}
\DeclareMathSymbol{\bigtimes}{1}{mathx}{"91}
\def\E{{\ensuremath{\mathbb E}}}
\long\def\comment#1{}
\newcommand{\bcal}{\ensuremath{\mathcal B}}
\newcommand{\ccal}{\ensuremath{\mathcal C}}
\newcommand{\fcal}{\ensuremath{\mathcal F}}
\newcommand{\hcal}{\ensuremath{\mathcal H}}
\newcommand{\mcal}{\ensuremath{\mathcal M}}
\newcommand{\ncal}{\ensuremath{\mathcal N}}
\icmltitlerunning{On Nonlinear Dimensionality Reduction, Linear Smoothing and Autoencoding}
\begin{document} 

\twocolumn[
\icmltitle{On Nonlinear Dimensionality Reduction, Linear Smoothing and Autoencoding}

\icmlsetsymbol{equal}{*}

\begin{icmlauthorlist}
\icmlauthor{Daniel Ting}{tableau}
\icmlauthor{Michael Jordan}{berkeley}
\end{icmlauthorlist}

\icmlaffiliation{tableau}{Tableau Software, Seattle, WA, USA}
\icmlaffiliation{berkeley}{University of California, Berkeley, CA, USA}

\icmlcorrespondingauthor{Daniel Ting}{dting@tableau.com}
\icmlcorrespondingauthor{Michael Jordan}{jordan@stat.berkeley.edu}

\icmlkeywords{manifold, nonlinear dimensionality reduction, Laplacian, LLE, LTSA}

\vskip 0.3in
]

\printAffiliationsAndNotice{}  %

\begin{abstract} 
We develop theory for nonlinear dimensionality reduction (NLDR).  A number of
NLDR methods have been developed, but there is limited understanding of how 
these methods work and the relationships between them.  There is limited basis 
for using existing NLDR theory for deriving new algorithms.  We provide a novel
framework for analysis of NLDR via a connection to the statistical theory of
 linear smoothers. This allows us to both understand existing methods 
and derive new ones.  We use this connection to smoothing to show that 
asymptotically, existing NLDR methods correspond to discrete approximations 
of the solutions of sets of differential equations given a boundary condition. 
In particular, we can characterize many existing methods in terms of just three 
limiting differential operators and boundary conditions. Our theory also provides
a way to assert that one method is preferable to another; indeed, we show 
Local Tangent Space Alignment is superior within a class of methods that assume
a global coordinate chart defines an isometric embedding of the manifold.  
\end{abstract} 

\vspace{-0.2cm}
\section{Introduction}
One of the major open problems in machine learning involves the development
of theoretically-sound methods for identifying and exploiting the low-dimensional 
structures that are often present in high-dimensional data.  Such methodology
would have applications not only in supervised learning, but also in visualization 
and nonlinear dimensionality reduction, semi-supervised learning, and manifold 
regularization.

Some initial steps have been made in this direction over the years under the
rubric of ``manifold learning methods.''  These nonlinear dimension reduction 
(NLDR) methods have permitted interesting theoretical analysis and allowed
the field to move beyond linear dimension reduction.  But the theoretical results
have fallen short of providing characterizations of the overall scope of the
problem, including the similarities and differences of existing methods and
their respective advantages.  Consider, for example, the classical problem 
of finding nonlinear embeddings of a Swiss roll with hole, shown in 
Figure \ref{fig:swisshole}.  Of the methods shown, Local Tangent Space 
Alignment is clearly the best at recovering the underlying planar structure 
of the manifold, but there is no existing theoretical explanation of this
fact.  Nor are there answers to the natural question of whether there are 
scenarios in which other methods would be better, why other methods perform 
worse, or whether the deficiencies can be corrected.  The current paper aims
to tackle some of these problems.  Not only do we provide new characterizations of NLDR methods, but by correcting deficiencies of existing methods we are able to propose new methods in Laplace-Beltrami approximation theory.

We analyze the general class of manifold learning methods that we refer to
as \emph{local, spectral methods}.  These methods construct a matrix using only 
information in local neighborhoods and take a spectral decomposition to find a nonlinear embedding.  This framework includes the 
commonly used methods:
 Laplacian Eigenmaps (LE) \cite{belkin2003laplacian},
 Local Linear Embedding (LLE) \cite{roweis00LLE},
Hessian LLE (HLLE) \cite{HessEig},
Local Tangent Space Alignment (LTSA) \cite{zhang2004principal}, and
Diffusion Maps (DM) \cite{coifman2006diffusion}.
We also consider several recent improvements to these classical methods, 
including Low-Dimensional Representation-LLE \cite{goldberg2008ldrlle},
Modified-LLE \cite{zhang2007mlle}, and MALLER \cite{cheng2013local}.
Outside of our scope are global methods that construct dense matrices 
that encode global pairwise information; these include multidimensional 
scaling, principal components analysis, isomap \cite{tenenbaum2000global}, 
and maximum variance unfolding \cite{weinberger2006unsupervised}.  We 
note in passing that these global methods have serious practical limitations,
either in terms of a strong linearity assumption or computational intractability.

Our general approach proceeds by showing that the embeddings for convergent 
local methods are solutions of differential equations, with a set of boundary 
conditions induced by the method.  The treatment of the boundary conditions
is the critical novelty of our approach.  This approach allows us to categorize 
the methods considered into three classes of differential operators and 
accompanying boundary conditions, as shown in Table \ref{tbl:categorization}. 
We are also able to delineate properties that allow comparisons of methods
within a class; see Table \ref{tbl:properties}.  These theoretical results
allow us to, for example, conclude that when the goal is to find an 
isometry-preserving, global coordinate chart, LTSA is best among the
classical methods considered. We can also show that HLLE belongs to the 
same class and converges to the same limit. Hence, they can be
used exchangeably as smoothness penalties.  We also improve existing 
Laplace-Beltrami approximations. In particular, we give a Laplace-Beltrami 
approximation that is consistent as a smoothing penalty even when boundary 
conditions are not satisfied.

Our analysis is based on the following two insights.  First, matrices obtained
from local, spectral methods can be seen as an operator that returns the bias 
from smoothing a function.  This allows us to make a connection to the theory
of linear smoothing in statistics.  Second, as the neighborhood sizes for each 
local method decrease, we obtain a linear operator that evaluates infinitesimal 
differences in that neighborhood. In other words, we obtain convergence to a 
differential operator. Thus, the asymptotic behavior of a procedure can be 
characterized by the corresponding differential operator in the interior of 
the manifold and (crucially) by the boundary bias in the linear smoother.

\setlength{\belowcaptionskip}{-0.1cm}
\begin{table}
	\begin{tabular}{c|ll}		 
		\multirow{2}{*}{\parbox{1.5cm}{Boundary Condition}} & 
		\multicolumn{2}{c}{2nd order penalty} \\  	
		& \multicolumn{1}{c}{$\E \;Tr(\hcal f)^2$} &
		\multicolumn{1}{c}{$\E\; \|\hcal f\|^2$} \\
		\hline
		\cellcolor[gray]{0.8} { None} & { Coefficient Laplacian} &  \\ \cline{2-3}
		\cellcolor[gray]{0.8}\multirow{1}{*}{${f = 0}$}  & { LLR-Laplacian} &  \\ \cline{2-3}
		\multirow{2}{*}{$\frac{\partial f}{\partial \eta} = 0$}  & Diffusion maps & \\ 
		&  Laplacian Eigenmaps & \\ \cline{2-3}
		 \cellcolor[gray]{0.8} & (LDR-, m-)LLE$^*$ & HLLE \\		
		\cellcolor[gray]{0.8} & LDR-LLE+  & LTSA \\
		\multirow{-3}{*}{\cellcolor[gray]{0.8}$\frac{\partial^2 f}{ \partial \eta^2} = \beta \Delta f$} & LLR-Laplacian &		
	\end{tabular}
\vspace{-0.0cm}
	\caption{Categorization of NLDR methods by their limit operator and boundary conditions. Within each category, methods with better properties appear closer to the bottom. Newly identified boundary conditions are highlighted in grey. LLE and existing variants have a dependence on the manifold curvature which makes the induced second-order penalty different from $\E\;Tr(\hcal f)^2$.}
	\label{tbl:categorization}
\end{table}

\setlength{\belowcaptionskip}{-0.05cm}
\begin{table}
	\begin{tabular}{c|ccc}		 
		Method & $L \succeq 0$ & \parbox{1.3cm}{Order of smoother} & \parbox{1.3cm}{Converges \& stable}\\ \hline
		Diffusion Maps & \xmark & $0^{th}$ & \cmark\\
		Laplacian Eigenmaps & \cmark & $0^{th}$ & \cmark\\ \hline
		LLE & \xmark & $2^{nd}$ & \xmark \\	
		LDR-LLE & \xmark & $2^{nd}$ & \cellcolor[gray]{0.8}  \xmark \\	
		\cellcolor[gray]{0.8} LDR-LLE+ & \xmark & $1^{st}$ & \cellcolor[gray]{0.8}  \cmark \\	
		LLR-Laplacian & \xmark & $1^{st}$ & \cmark \\ \hline	
		HLLE & \cmark & $2^{nd}$ & \cellcolor[gray]{0.8}  \cmark \\	
		LTSA & \cmark & $1^{st}$ & \cellcolor[gray]{0.8}  \cmark \\ \hline
		\cellcolor[gray]{0.8} Coefficient Laplacian & \cmark & $1^{st}$ & \cmark 
	\end{tabular}
	\caption{Properties of NLDR methods. Non-convergent or unstable methods may converge if regularization is added. Lower order smoothers generally have less variance. New methods and convergence results are highlighted in grey. }
	\label{tbl:properties}
\end{table}

\vspace{-0.1cm}
\section{Preliminaries}
\label{sec:preliminaries}
We begin by introducing some basic mathematical concepts and providing a 
high-level overview of the relationship between NLDR and linear smoothing.

Local, spectral methods share a construction paradigm:
\vspace{-0.1cm}
\begin{enumerate}[nolistsep]
	\item Choose a neighborhood for each point $x_i$;
	\item Construct a matrix $L$ with $L_{ij} \neq 0$ only if $x_j$ is a neighbor of $x_i$;
	\item Obtain a nonlinear embedding from an eigen- or singular value decomposition of $L$.
\end{enumerate}
Step 2 for constructing the matrix $L$ can be seen to be equivalent to constructing a linear smoother. 
We use this equivalence to compare existing NLDR methods and generate new ones by examining the asymptotic bias of the smoother on the interior and boundary of the manifold. In particular, we show that whenever the neighborhoods shrink in Step 1 as the number of points $n \to \infty$, the matrix $L$ converges to a differential operator on the interior of the manifold. Nonlinear dimensionality reduction methods thus solve a eigenproblem involving this differential operator. The solutions of the eigenproblem on a compact manifold are typically only well-separated and identifiable in the presence of boundary conditions. If $L$ is symmetric and the smoother's bias has a different asymptotic rate on the boundary than the interior, then it imposes a corresponding boundary condition for the eigenproblem. 
If $L$ is not symmetric, then the boundary conditions can be determined from the boundary bias for $L$ and its adjoint $L^T$.

\begin{figure}
\includegraphics[height=1.7in, width=3.4in]{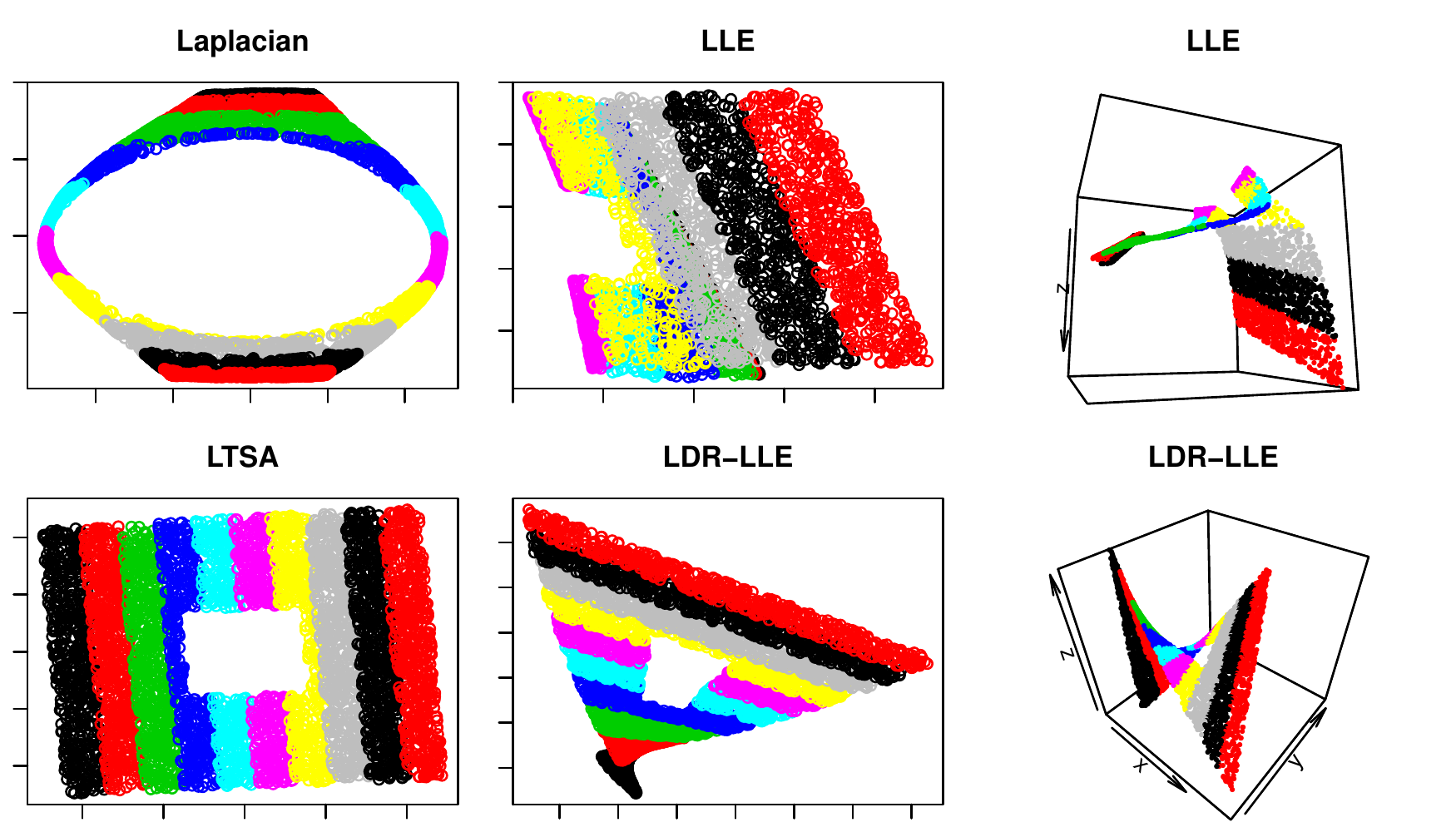} 
	\caption{The behavior of methods on the Swiss roll with hole are shown. LTSA yields the best reconstruction. The boundary effects of the graph Laplacian distort the manifold. LLE's embedding often contains kinks. The LLR-Laplacian and LDR-LLE produce a 3D embedding which is nearly identical up to rotations. This embedding approximates the functions $x_1, x_2, x_1 x_2$.  LLR-LLE was the better of the two at picking out a 2D embedding. }
	\label{fig:swisshole}
\end{figure}

\vspace{-0.1cm}
\subsection{Linear smoothers}
Consider the standard regression problem for data generated by the model $Y_i = f(X_i) + \epsilon_i$ where $\epsilon_i$ is a zero-mean error term.
A regression model is a \textit{linear smoother} if 
\vspace{-0.05cm}
\begin{align}
\hat{\mathbf{y}} &= S(\mathbf{x})\mathbf{y}
\end{align}
\vspace{-0.05cm}
and the smoothing matrix $S(\mathbf{x})$ does not depend on the outcomes $\mathbf{y}$.
Examples of linear smoothers include linear least squares regression on a matrix $X$, 
kernel and local polynomial regression, and ridge regression. 

In the regression problem, one wishes to infer the conditional mean function $f$ given noisy observations $Y$. For manifold learning problems, the goal is to learn noiseless global coordinates $\Phi_i = f(X_i)$ but the outcomes $\Phi$ are never observed. 
In both linear smoothing and manifold learning,
the smoothing matrix makes no use of the response and simply maps rough functions into a space of smoother functions. The embedding that is learned by NLDR methods is a space of functions that are best reconstructed by the smoother. In other words, the smoother defines a form of autoencoder and the learned space is a space of likely outputs from the autoencoder.

\vspace{-0.1cm}
\subsection{NLDR constructions of linear smoothers}
\label{sec:nldr smoothers intro}
It is a simple but crucial fact that each of the NLDR methods that
we consider construct or approximate a matrix $L = G(I - S)$ where 
$G$ is a diagonal matrix and $S$ is a linear smoother.  Thus, $L$ 
measures the bias of the prediction $S f$ weighted by $G$.

For example, Diffusion Maps, with a Gaussian kernel constructs the 
Nadaraya-Watson smoother $S = D^{-1} K$, where $D$ is the diagonal 
degree matrix and $K$ is the kernel matrix. The constructed embedding 
consists of the right singular vectors corresponding to the smallest 
singular values of $L^{DM} = I - S$.  
Laplacian Eigenmaps, using an unnormalized graph Laplacian, differ 
only by the reweighting $L^{LE} = D L^{DM} = D(I - S)$, which ensures 
that the matrix $L^{LE}$ is positive semidefinite.
\vspace{-0.1cm}
\subsection{Assumptions}
The usual setting for linear smoothing is Euclidean space. To account for the manifold setting, we must make some regularity assumptions and demonstrate that calculations in Euclidean space  approximate calculations on the manifold sufficiently closely.

We consider a smooth, compact $m$-dimensional Riemannian manifold $\mcal$ with smooth boundary
embedded in $\mathbb{R}^d$. 
From this manifold, points $\{X_1, \ldots, X_n\} \in \mathbb{R}^d$ are drawn from a uniform distribution on the manifold. 
This uniformity assumption can be weakened to admit a smooth density and obtain density weighted asymptotic results. 

In the continuous case, we consider neighborhoods $\ncal_x = B(x, h)$ for a given bandwidth $h$ where distance is Euclidean distance in the ambient space. When appropriate, we will take $h \to 0$ as $n \to \infty$. In the discrete case, denote by $\ncal_i$ the indices for neighbors of $X_i$ in the point cloud. This neighborhood construction is only mildly restrictive since kNN neighborhoods are asymptotically equivalent to $\epsilon$ neighborhoods in the interior of the manifold when points are sampled uniformly and the radius of the neighborhoods is $\Theta(h)$ on the boundary. 

\vspace{-0.1cm}
\subsection{Local coordinates}
\label{sec:local coordinates}
Most existing methods and the ones we construct require access to a local coordinate system with manifold dimension $m$ for each point.
This coordinate system is estimated using a local PCA or SVD at each point. Let $X_{\ncal}$ be the $|\ncal| \times d$ matrix of ambient space coordinates for points in neighborhood $\ncal$ centered on $x$ and $\tilde{X}_{\ncal}$ be the corresponding matrix centered on $x$.
The $m$ top right singular vectors
\begin{align}
\tilde{X}_{\ncal} &= U_{\ncal} \Lambda V_{\ncal}^T
\end{align}
give estimated tangent vectors at $x$.
The top rescaled left singular vectors, $\tau_i = \lambda_i U_{\ncal, i}$, project 
points in $\ncal_i$ to the tangent space.
The normal coordinates $u_i$, corresponding to the geodesics traced out by the tangent vectors $V_{\ncal,i}$, agree closely with these tangent space coordinates. 
Specifically, by Lemma 6 in \citet{coifman2006diffusion}, 
$u_i = \tau_i + q_3(\mathbf{t}) +  O(h^4)$, where $q_3$ is a homogeneous polynomial of degree three whenever the coordinates are in a unit ball of radius $h$. 
We adopt the convention that $u_i(y)$ refers to the normal coordinates at $x \in \mcal$ for $y \in \ncal_x$.

Our results rely on integration in normal coordinates for a ball of radius $h$ in the distance metric of the ambient space. To account for the manifold curvature, the volume form and neighborhood sizes must be accounted for in the integral.
 Lemma 7 in \citet{coifman2006diffusion} further provides a Taylor expansion for the volume form for the Riemannian metric $dV_g(u) = 1 + q_2(u) + O(h^3)$, where $q_2$ is a homogeneous polynomial of degree two. Likewise, distances in the ambient space and normal coordinates differ by $\|y-x\|^2 = \|u(y)\|^2 + \tilde{q}_2(u(y)) + O(h^3)$ where $u(y)$ denotes normal coordinates for $y$ about $x$, $\tilde{q}_2$ is homogenous degree 2, and the distance on the left-hand side 
is with respect to the ambient space.
 Consequently, integrals for any homogeneous polynomial $\acute{q}$ of degree one or two satisfy
 \begin{align}
 \int_{\ncal_x} \acute{q}(u) dV_g(u) &= \int_{B(0,h)} \acute{q}(u) du + O(h^4),
 \end{align}
where the integral on the right represents integration in $\mathbb{R}^m$
and the $O(h^4)$ term hides a smooth function that depends on the curvature 
of the manifold at $x$. 
 
For the purposes of this paper, we do not account for error from estimation 
of the tangent space.  Thus we assume that we have a sufficiently accurate 
estimate of the normal coordinates.  Accounting for this estimation error is
a natural direction for further work.

\section{Analysis of existing methods}
\label{sec:analysis}
Each of the existing NLDR methods construct a matrix $L_n$ from a point cloud of $n$ points with bandwidth $h$. The resulting nonlinear embeddings are obtained from the bottom eigenvectors of $L_n$ or $L_n^T L_n$. 
We consider $L_n$ as a discrete approximation to an operator $\mathbf{L}_h : \fcal \to \fcal$ on functions $\fcal \subset C^{\infty}(\mcal)$.
We examine the limit operator $\mathbf{L}_h$ constructed by each of the NLDR methods for a fixed bandwidth $h$. We show that as the bandwidth $h \to 0$, $\mathbf{L}_h \to \mathbf{L}_0$ where $\mathbf{L}_0$ is a differential operator. The exception is LLE where there is no well defined limit. The stochastic convergence of the empirical constructions $L_n \stackrel{n \to \infty}{\to}  L_h$ is not considered in this paper.

\vspace{-0.1cm}
\subsection{Taylor expansions and local polynomial bases}
As described in Section \ref{sec:nldr smoothers intro}, most existing NLDR 
methods can be expressed as $L_h = D_h(I - S_h)$, where $D_h$ is a multiplication operator corresponding to a diagonal matrix and $S_h$ is a linear smoother. 
Denote by $S_h(x, \cdot)$ the linear function such that $\langle S_h(x, \cdot), f \rangle = (S_h f)(x)$. This function exists by the Riesz representation theorem. We say $S_h$ is \emph{local} when for all $x \in \mcal$, the support of $S_h(x, \cdot)$ is contained in a ball of radius $h$ centered on $x$. We assume that $S_h$ is a bounded 
operator (as unbounded operators are necessarily poor smoothers). 

Consider $S_h$ applied to a function $f \in C^\infty(\mathcal{M})$.
A Taylor expansion of $f$ in a normal coordinates at $x$ gives
\begin{align}
\label{eqn:operator poly}
(S_h f)(x) &= (S_h 1)(x)  f(x)  + \nabla f (x) (S_h (y-x))(x) \nonumber \\ 
&\quad +  Tr\left( \hcal f (x) S_h (y-x)(y-x)^T\right)(x) + \cdots \nonumber \\ 
&\quad + O(h^r),
\end{align}
where the error term holds since 
$f$ has bounded $r^{th}$ derivatives due to the compactness of $\mcal$.
Here $y-x = u$ is a function denoting the normal coordinate map at $x$.
From this it is clear that the asymptotic behavior of $S_h$ in a neighborhood of $x$ is determined by its behavior on a basis of local polynomials. Furthermore, it can well approximated by examining the behavior on only low-degree polynomials.

\vspace{-0.1cm}
\subsection{Convergence to a differential operator}
Of particular interest is the case where $S_n$ operates locally on shrinking neighborhoods. In this case, the following theorem (proven in the supplementary material) is an immediate consequence of applying the sequence of smoothers to the Taylor expansion.
\begin{theorem}
Let $S_n$ be a sequence of linear smoothers where the support of $S_n(x,\cdot)$ is contained in a ball of radius $h_n$.
Further assume that the bias in the residuals $h_n^{-k}(I-S_n)f = o(1)$ for some integer $k$
and all $f \in \ccal \subset C^{\infty}(\mcal)$. Then if $h_n \to 0$ and $h_n^{-k}(I-S_n)$ converges to a bounded linear operator as $n \to \infty$, then it must converge to a differential operator of order at most $k$ on the domain $\ccal$.
\end{theorem}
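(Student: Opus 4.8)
The plan is to substitute the local Taylor expansion \eqref{eqn:operator poly} into the normalized residual $h_n^{-k}(I - S_n)$ and carefully bookkeep the powers of $h_n$. Working in normal coordinates at a point $x$, every term of $S_n f$ is a full contraction of a derivative tensor $D^j f(x)$ of order $j$ against a \emph{smoother moment} $c_j^{(n)}(x) := \tfrac{1}{j!}\big(S_n (y-x)^{\otimes j}\big)(x)$. The key a priori estimate is that, because $S_n$ is local with support radius $h_n$ and is a bounded operator, $\|(y-x)^{\otimes j}\| \le h_n^{j}$ on the support and hence $c_j^{(n)}(x) = O(h_n^{j})$, uniformly over the compact manifold $\mcal$. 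Subtracting $S_n f$ from $f$ then gives
\[
(I - S_n) f(x) = \big(1 - c_0^{(n)}(x)\big) f(x) - \sum_{j=1}^{r-1} c_j^{(n)}(x)\cdot D^j f(x) + O(h_n^{r}).
\]

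Multiplying through by $h_n^{-k}$, the order-$j$ term acquires the prefactor $h_n^{-k} c_j^{(n)}(x) = O(h_n^{\,j-k})$. Taking the Taylor order $r > k$, every contribution with $j > k$ is $O(h_n^{\,j-k}) \to 0$; this is exactly the mechanism that annihilates all derivatives of order exceeding $k$ in the limit and caps the order of the limiting operator at $k$. What survives is the finite sum $\sum_{j=0}^{k} b_j^{(n)}(x)\cdot D^j f(x)$ with $b_0^{(n)} = h_n^{-k}(1 - c_0^{(n)})$ and $b_j^{(n)} = -\,h_n^{-k} c_j^{(n)}$ for $j \ge 1$. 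To see that each coefficient converges (rather than merely their particular combination), I would test the operator against a spanning family of local monomials $f_\alpha(y) = (y-x)^{\alpha}$ of degree at most $k$: their derivative tensors form a triangular, invertible selection system, so solving the resulting linear system writes every $b_j^{(n)}(x)$ as a fixed linear combination of the quantities $h_n^{-k}(I - S_n) f_\alpha(x)$, which converge by the standing hypothesis that $h_n^{-k}(I-S_n)$ tends to a bounded operator. Hence $b_j^{(n)}(x) \to b_j(x)$ and the limit is the differential operator $\mathbf{L}_0 f(x) = \sum_{j=0}^{k} b_j(x)\cdot D^j f(x)$ of order at most $k$, with locality inherited from the shrinking supports.

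The main obstacle is to interchange the two limiting operations---letting $n \to \infty$ and truncating the Taylor series---uniformly rather than formally. Concretely one must control the rescaled remainder $h_n^{-k} \cdot O(h_n^{r}) = O(h_n^{\,r-k})$ and show it vanishes uniformly on $\mcal$; this relies on the uniform bound on the $r$-th derivatives of $f$ furnished by compactness (as already used in \eqref{eqn:operator poly}), a uniform bound on the operator norm of $S_n$, and the normal-coordinate and volume-form estimates of Section~\ref{sec:local coordinates}. A second subtlety is that for $j < k$ the moments are only guaranteed to be $O(h_n^{j})$, so the normalized coefficients $h_n^{-k} c_j^{(n)}$ could in principle diverge; it is precisely the assumed convergence to a \emph{bounded} operator, exploited through the monomial-testing step, that forces these lower-order coefficients to have finite limits and thereby yields a genuine differential operator rather than an ill-defined limit.
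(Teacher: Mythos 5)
Your proposal is correct and follows essentially the same route as the paper's proof: both hinge on the observation that locality of $S_n$ plus boundedness makes the rescaled operator $h_n^{-k}(I-S_n)$ annihilate monomials of degree exceeding $k$ (since such a monomial has size $O(h_n^{\ell})$ on the support, giving $O(h_n^{\ell-k})\to 0$), after which applying the operator to a Taylor expansion with controlled remainder shows the limit acts only through derivatives of order at most $k$. Your additional bookkeeping---identifying the smoother moments $c_j^{(n)}$, testing against low-degree monomials to extract convergence of each coefficient, and explicitly controlling the rescaled remainder $O(h_n^{r-k})$---is a more detailed rendering of steps the paper leaves implicit, not a different argument.
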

As a result, each of the existing NLDR methods considered can be seen as discrete approximations for the solutions of differential equations $D f = \lambda f$ given some boundary conditions, where $D$ is some differential operator and $\lambda \in \mathbb{R}$.

\subsection{Laplacian and boundary behavior}
The convergence of Laplacians constructed from point clouds to a weighted Laplace-Beltrami operator has been well-studied by \citet{HeinGraphNormalizations} and \citet{ting2010laplacian}. 
In particular, the spectral convergence of Laplacians constructed using kernels has been studied by \citet{belkin2007convergence}, \citet{ConsistencySC}, and \citet{berry2016consistent}. 
In the presence of a smooth boundary, the eigenproblem for the Diffusion Maps Laplacian that converges to the unweighted Laplace-Beltrami operator has been shown by \citet{coifman2006diffusion} and \citet{singer2016spectral} to impose Neumann boundary conditions. Specifically, it solves the differential equation
\begin{align}
\Delta u &= \lambda u  \quad s.t.\quad
\frac{\partial u}{\partial \eta_x}(x) = 0 \quad x \in \partial \mcal,
\label{eqn:boundary condition}
\end{align}
where $\eta_x$ is the vector normal to the boundary at $x$.
This result is easily extended to other Laplacian constructions.

As the Diffusion Maps operator is the bias operator for a Nadaraya-Watson kernel smoother, and the unnormalized Laplacian is a rescaling of this bias, the Neumann boundary conditions can be derived from existing analyses for kernel smoothers. The boundary bias of the kernel smoother is $c \cdot h \cdot \partial f /\partial \eta_x (x) + o(h)$ for some constant $c$ when $x \in \partial \mcal$ while in the interior it is of order $O(h^2)$ when points are sampled uniformly. This trivially follows from noting that the first moment of a symmetric kernel is zero when the underlying distribution is uniform and that the boundary introduces asymmetry in the direction $\eta_x$.
Hence, the operator must be scaled by $\Theta(h^{-2})$ to obtain a non-trivial limit in the interior, but this scaling yields
divergent behavior at the boundary when the Neumann boundary condition $\partial f /\partial \eta_x (x) = 0$ is not met. Since eigenfunctions cannot exhibit this divergent behavior, they must satisfy the boundary conditions.
The significance of this result is given by the following simple theorem.
\begin{theorem}
	Let $L$ be an operator imposing the Neumann boundary condition $\partial f /\partial \eta_x (x) = 0$. Then, if there exists an isometry-preserving global coordinate chart, a spectral decomposition of $L$ cannot recover the global coordinates, as they are not in the span of the eigenfunctions of $L$.
\end{theorem}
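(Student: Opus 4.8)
The plan is to work inside the isometric coordinates supplied by the hypothesis and reduce the claim to a single incompatibility: the Neumann boundary condition that every eigenfunction of $L$ obeys is violated by the global coordinate functions. Let $\Phi = (\Phi_1,\dots,\Phi_m):\mcal\to\R^m$ be the isometry-preserving chart, mapping $\mcal$ onto a flat domain $\Omega\subset\R^m$; the functions $\Phi_i$ are exactly the global coordinates we hope to recover (for the Swiss roll with hole, the unrolled planar coordinates). Since $\Phi$ is an isometry it pushes the Laplace--Beltrami operator to the flat Laplacian on $\Omega$, and in these coordinates each $\Phi_i$ is the linear map $u\mapsto u_i$, hence harmonic: $\Delta\Phi_i=0$. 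This already signals why the $\Phi_i$ are natural candidate embeddings---they are the smoothest non-constant functions---yet, as I show next, the boundary excludes them.

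Next I would compute the normal derivative of $\Phi_i$ on $\partial\mcal$. Because the chart is isometric, the gradients $\nabla\Phi_i=e_i$ form an orthonormal frame, so $\partial\Phi_i/\partial\eta_x(x)=\langle e_i,\eta_x\rangle=(\eta_x)_i$, the $i$-th component of the unit outward normal. Summing over the coordinates yields the pointwise identity $\sum_{i=1}^m\big(\partial\Phi_i/\partial\eta_x(x)\big)^2=\|\eta_x\|^2=1$ for every $x\in\partial\mcal$. Thus the coordinate functions can never jointly satisfy $\partial f/\partial\eta_x=0$; at each boundary point at least one $\Phi_i$ has a strictly nonzero normal derivative.

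To finish, I would use that the boundary-trace functional $f\mapsto \partial f/\partial\eta_x(x)$ is linear. Every eigenfunction of $L$ satisfies the Neumann condition, so by linearity every finite linear combination of eigenfunctions does as well; the span of the eigenfunctions therefore lies entirely within the subspace $\{f:\partial f/\partial\eta_x=0\text{ on }\partial\mcal\}$. Since each $\Phi_i$ lies outside this subspace by the identity above, no $\Phi_i$---and in particular no single eigenfunction equal to some $\Phi_i$---belongs to the span. As the spectral decomposition returns only eigenfunctions and their combinations, it cannot recover the global coordinates.

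The main obstacle is pinning down the precise notion of ``span.'' For the finite set of bottom eigenvectors a method actually outputs, the linearity-of-the-trace argument is immediate and complete. For the entire eigenbasis one must be careful, since a complete orthonormal system is dense in $L^2(\mcal)$ and the claim is plainly false for the $L^2$ closure, where boundary traces do not survive limits. The resolution is to phrase the statement in a topology in which the Neumann trace is continuous---concretely, the form domain of the self-adjoint Neumann realization of $L$, an $H^1$- (or $H^2$-) type space on which $f\mapsto\partial f/\partial\eta_x$ is bounded. The $\Phi_i$ are smooth but fail the boundary condition, so they lie outside this domain and cannot be approximated by eigenfunctions in its norm; I would state the theorem with this domain understood, treating the finite-span version as the case of practical relevance.
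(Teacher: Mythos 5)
Your proof is correct, and it takes the route the paper itself intends: the paper never writes out a proof of this theorem at all---it is stated as a ``simple theorem'' that is meant to follow immediately from the preceding observation that eigenfunctions of $L$ must satisfy the Neumann condition---so the content you supply is exactly the missing argument. Your key steps (an isometric chart has orthonormal coordinate gradients, hence $\sum_{i=1}^m\bigl(\partial \Phi_i/\partial \eta_x(x)\bigr)^2=\|\eta_x\|^2=1$ at every $x\in\partial\mcal$, so the coordinate functions cannot all have vanishing normal derivative, while by linearity of the boundary trace every finite linear combination of eigenfunctions does satisfy the Neumann condition) are the natural formalization of what the paper asserts in one clause, and the explicit sum-of-squares identity is a nice touch that localizes the violation at every boundary point rather than leaving it as an assertion. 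More substantively, your closing paragraph repairs a real imprecision in the paper's statement: taken literally, ``not in the span of the eigenfunctions'' is false if span means the $L^2$-closure, since the Neumann eigenfunctions are complete in $L^2(\mcal)$ and therefore their closed span contains the coordinate functions. Your resolution---restricting either to finite linear combinations (the case relevant to any spectral embedding, which outputs finitely many eigenvectors) or to closure in a topology in which the Neumann trace is continuous, such as the form or graph domain of the self-adjoint Neumann realization---is the correct way to make the theorem true, and it goes beyond what the paper provides.
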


\begin{figure}
	\vspace{-0.1cm}
\hspace*{-0.2cm}\includegraphics[height=1.2in, width=3.3in]{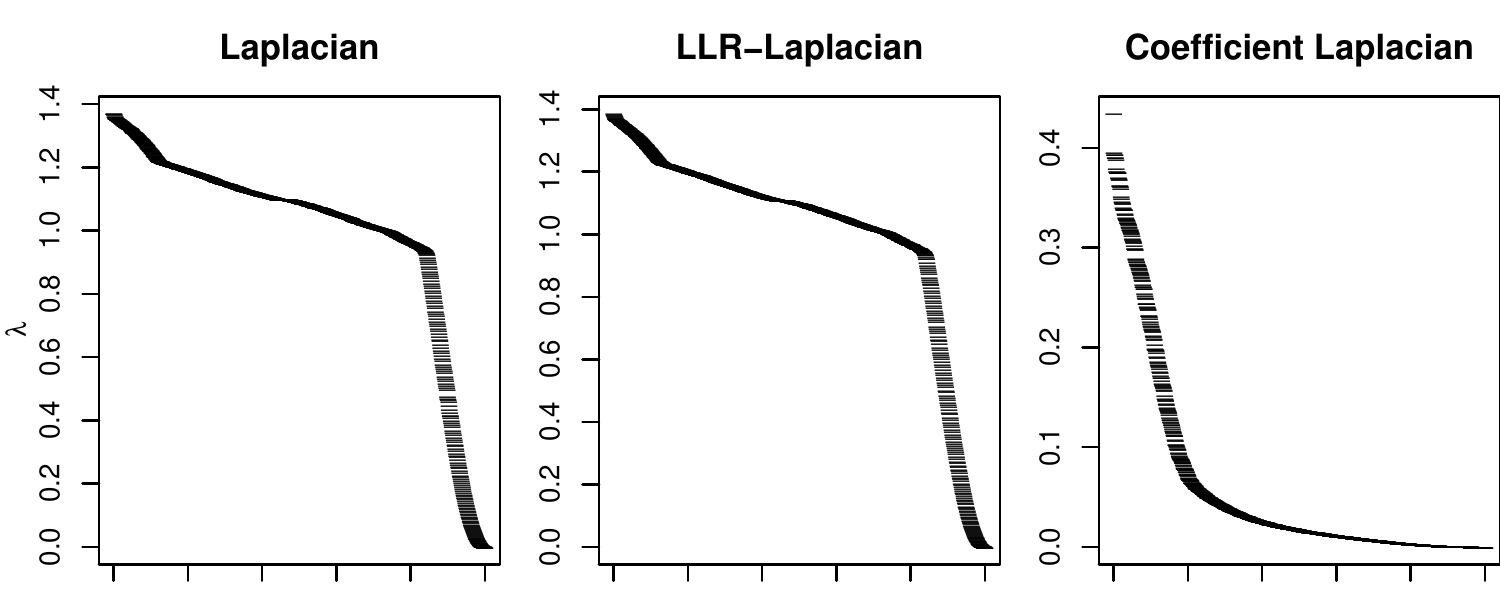} 
	\vspace{-0.1cm}
\hspace*{-0.2cm}\includegraphics[height=0.8in, width=3.3in]{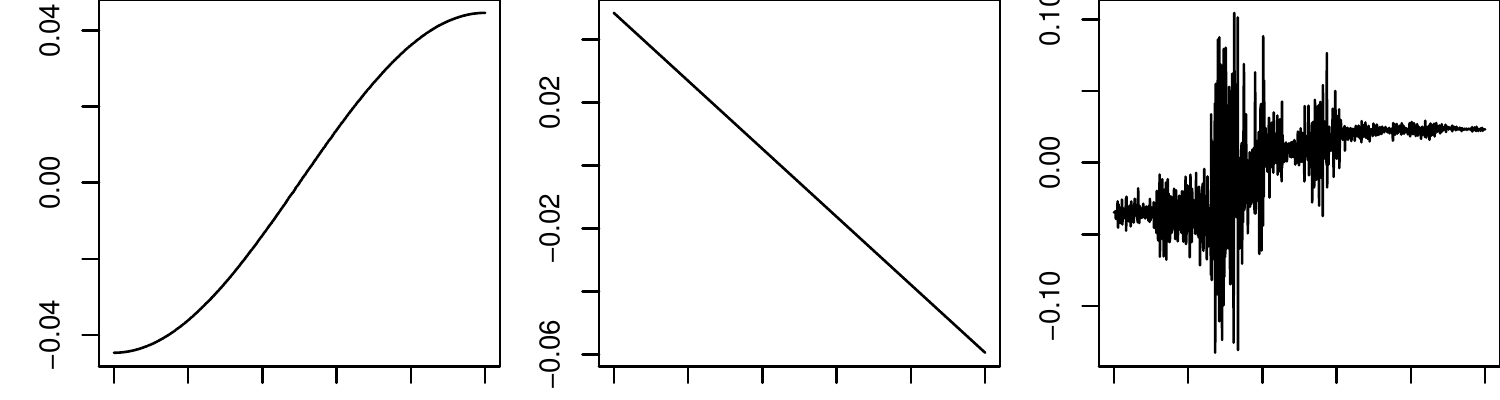}
	\caption{The top row shows spectra for different Laplace-Beltrami approximations on a line segment. The Coefficient Laplacian spectrum is very different as the spectrum is not discrete. The bottom nontrivial eigenvector shows the effect of different boundary conditions. A Neumann condition yields a $cos$ function. A second-order condition yields linear functions. No condition yields useless eigenfunctions. }
	\label{fig:spectra}
\end{figure}

\subsection{Local Linear Embedding}
\label{sec:LLE}
Local Linear Embedding (LLE) solves the boundary bias problem by explicitly canceling out the first two terms in the Taylor expansion.
For a point cloud $\mathbf{X}$, LLE constructs a weight matrix $W$ satisfying $W 1 = 1$ and $W \mathbf{X} = \mathbf{X}$. 
\citet{goldberg2008ldrlle} and \citet{ting2010laplacian} showed that this condition is not sufficient to identify a limit operator. The behavior of LLE depends on a regularization term. We give a more refined analysis of the constrained ridge regression procedure used to generate the weights.

Let $\tilde{\ncal}_i = \ncal_i \backslash \{i\}$ be the neighborhood that excludes $i$ and
$\tilde{X}_{\ncal_i} = X_{\ncal_i} - 1_{|\ncal_i|} X_i$ be the points in $\tilde{\ncal}_i$ centered on $X_i$. To reconstruct $X_i$ from its neighbors, one solves the regression problem
\begin{align}
X_i &= X_{\tilde{\ncal_i}} w_i = X_i + \tilde{X}_{\tilde{\ncal_i}} w_i,
\end{align}
under the constraint that the weights sum to one, $\|w_i\|_1 = 1$.
Adding a ridge regression penalty $\lambda$ and applying Lagrange multipliers gives  $w_i \propto (\tilde{X}_{\tilde{\ncal}_i}\tilde{X}_{\tilde{\ncal}_i}^T + \lambda I)^{-1} 1_{\ncal_i}$.

Consider the singular value decomposition $\tilde{X}_{\tilde{\ncal}_i} = UDV^T$, and let $U_c$ be the orthogonal complement of $U$.
This allows us to rewrite the weights
\begin{align}
w_i &\propto (U(D^2 + \lambda I)^{-1}U^T + \lambda^{-1} U_c U_C^T) 1_{\ncal_i} \\
&\propto 1_{\ncal_i} -  U(I - \lambda (D^2 + \lambda I)^{-1})U^T 1_{\ncal_i}. \label{eqn:LLE weights}
\end{align}
In other words, the weights are the same as the constant weights used by a $h$-ball Laplacian but with a correction term.
This correction term in Eq.~\ref{eqn:LLE weights} depends only on the curvature of the manifold. One can see this by noting that the top $m$ left singular vectors correspond to the points expressed in normal coordinates around $X_i$. As the neighborhood is symmetric around $X_i$, it follows that $U_{1:m}^T 1 \approx 0$. The remaining left singular vectors correspond to directions of curvature of the manifold. 
From this, is it easy to see that LLE with $h$-ball neighborhoods is equivalent to $\Delta_h - C$ in the interior of the manifold where $\Delta_h$ is the $h$-ball Laplacian and $C$ is some correction term that reduces the penalty on the curvature of a function when it matches the curvature of the manifold.

\subsection{Hessian LLE}
Hessian LLE performs a local quadratic regression in each neighborhood. It then discards the locally linear portion in order to construct a functional that penalizes second derivatives.
The Taylor expansion in Eq.~\ref{eqn:operator poly}
gives that the entries of the Hessian are coefficients of a local quadratic polynomial regression.  

Given a basis $Z_{\ncal_i}$ of monomials of degree $\leq 2$ in local coordinates for each point in neighborhood $\ncal_i$
where the first $m+1$ basis elements span linear and constant functions, the entries in the estimated Hessian $\hcal(f)_{X_i}$ are $\Pi(Z_{\ncal_i}^TZ_{\ncal_i})^{-1} Z_{\ncal_i} f$
where $\Pi$ is the projection onto the $m+2$ through $1+m + m(m-1)/2$ coordinates. 

Rather than directly estimating the Hessian, HLLE performs the following approximation. Orthonormalize the basis of monomials using Gram-Schmidt and discard the first $m+1$ vectors to obtain $\tilde{Z}_{\ncal_i}$. 
By orthonormality, the regression coefficients in this basis are simply the inner products
$\tilde{Z}_{\ncal_i}^T f$. 
We note that the approximation only recovers the Hessian in the interior of the manifold and when the underlying sampling distribution is uniform.

Given Hessian estimates $\hat{\beta}_i = \tilde{Z}_{\ncal_i} f$ for each point, the sum of their Frobenius norms is easily expressed as 
\begin{align*}
\displaystyle \sum_i \|\hcal f(x_i)\|^2 &\approx 
\sum_i f^T \tilde{Z}_{\ncal_i}^T  \tilde{Z}_{\ncal_i} f %
= f^T \left(\sum_i Q^{HLLE}_i\right) f.
\end{align*}

\vspace{-0.1cm}
\subsection{Local Tangent Space Alignment}
Local tangent space alignment (LTSA) computes a set of global coordinates that can be aligned to local normal coordinates via local affine transformations.

Given neighbors of $X_i$ expressed in local coordinates $U_{\ncal_i} \in \mathbb{R}^{|\ncal_i| \times m}$,
LTSA finds global coordinates $\mathbf{Y}$ and affine alignment transformations $A_i$ that minimize the difference between the aligned global and local coordinates, $J_i(\mathbf{Y}, \mathbf{A}) = \|C Y_{\ncal_i} - U_{\ncal_i} A_i\|_F^2$,
where $C = 1 - 11^T / |\ncal_i|$ is a centering matrix. 

Given the global coordinates $Y$, this is a least-squares regression problem for a fixed set of covariates $U_{\ncal_i}$. Thus, the best linear predictors for $C Y_{\ncal_i}$ are given by 
$C Y_{\ncal_i} H^{(1)}_{\ncal_i}$ where $H^{(1)}_{\ncal_i}$ is the hat matrix for the covariates $U_{\ncal_i}$. 
The objective can be expressed using local operators $Q^{LTSA}_i$ as
\begin{align}
\min_{\mathbf{A}} \sum_i J_i(\mathbf{Y}, \mathbf{A}) 
&= \sum_i   Y_{\ncal_i}^T C^T (I - H^{(1)}_{\ncal_i}) C Y_{\ncal_i} \nonumber \\
&= Y^T \left( \sum_i  Q^{LTSA}_i \right) Y.
\end{align}

\vspace{-0.2cm}
\section{Equivalence of HLLE and LTSA}
\label{sec:LTSA}
Although LTSA is derived from a very different objective than HLLE, we will show that they are asymptotically equivalent. This greatly strengthens a result in \citet{zhang2018hlleltsa} which showed the equivalence of a modified version of HLLE to LTSA under the restrictive condition that there are exactly $m(m+1)/2 + m - 1$ neighbors.

We show the asymptotic equivalence in two steps. First, we show 
$L_h^{HLLE}$ and $L_h^{LTSA}$ converge to the same differential operator by showing convergence in the weak operator topology.
Then, we give an argument that derives the boundary condition for both methods.

\vspace{-0.1cm}
\subsection{Continuous extension}
Both HLLE and LTSA are composed of the sum of local projection operators $Q_i$ on the point cloud. As we wish to study their limit behavior on smooth functions, we consider the continuous extension to the manifold. To form the continuous extension, the sum is replaced by an integral
\begin{align}
L_h^{LTSA} f &=\frac{\alpha_h}{Vol_h} \int_\mcal Q_y f dV_g(y),
\end{align}
where $V_g$ is the natural volume measure on $\mcal$,
$\alpha_h$ is an appropriate normalizing constant,
$Vol_h$ is the volume of a ball of radius $h$ in $\mathbb{R}^m$,
and $Q_y$ are local operators.
In the case of LTSA, $Q^{LTSA}_y = I_{\ncal_y} - H_y$ where $H_y$ is the projection onto linear functions in the neighborhood $\ncal_y$ and on a local normal coordinate system rather than linear functions on a discrete set of points.

To identify the order of the differential operator that LTSA converges to, we simply need to find the scaling that yields a non-trivial limit.
For the functional $f^T L_h^{LTSA} f$ to non-trivially converge as $h \to 0$, the component terms $f^T Q_y f$ should also non-trivially converge. Since $Q_y$ is a projection onto the space orthogonal to linear functions, this is equivalent to $f^T Q_y^T Q_y f = \| Q_y f\|^2 = O(\alpha_h h^{4} (\|\hcal f_y\|^2 + h))$.
For this to converge non-trivially, one must have $\alpha_h = O(h^{-4})$.
The same argument holds for HLLE.
Thus, both LTSA and HLLE yield fourth-order differential operators in the interior of $\mcal$. 

\vspace{-0.1cm}
\subsection{HLLE and LTSA difference is neglible}
\label{sec:LTSA/HLLE difference}
To show the equivalence of HLLE and LTSA, we show that the remainder term resulting from their difference is the bias term of an even higher order smoother. As such, this bias term goes to zero faster than the HLLE and LTSA bias terms, which yields convergence in the weak operator topology. This gives following theorem. Proof details are given in the supplementary material.
\begin{theorem}
	$L_h^{LTSA} - L_h^{HLLE} \to 0$ as $h\to 0$ in the weak operator topology of $C^\infty(\mcal)$ equipped with the $L_2$ norm.
\end{theorem}

\vspace{-0.1cm}
\subsection{HLLE and LTSA boundary behavior }
To establish that the limit operator has the same eigenfunctions, we must  show that the boundary conditions that those eigenfunctions satisfy also match. We provide a theorem where a single proof identifies the boundary condition for both methods. This boundary condition applies to the second derivatives of a function and admits linear functions, unlike the Neumann boundary condition imposed by graph-Laplacian-based methods.

\begin{theorem}
	$L^{LTSA}_h f(x) \to \infty$ and $L^{HLLE}_h f(x) \to \infty$ as $h\to 0$ for any $x \in \partial \mcal$ and  $f \in C^{\infty}(\mcal)$ unless
\begin{align}
\label{eqn:HLLE boundary}
\frac{\partial^2 f}{\partial \eta_x^2}(x) &= \frac{m+1}{2} (\Delta f)(x) \quad \forall x \in \partial \mcal,
\end{align}
where $\eta_x$ is the tangent vector orthogonal to the boundary.
\end{theorem}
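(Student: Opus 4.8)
The plan is to mirror the Neumann-boundary argument used earlier for the graph Laplacian: both operators are scaled by $\alpha_h = \Theta(h^{-4})$ so that they converge to a finite fourth-order operator in the interior, and I will show that at a boundary point this same scaling forces divergence unless a particular combination of second derivatives vanishes. Since $Q_y^{LTSA} = I_{\ncal_y} - H_y$ is the residual of a local affine regression, the constant and linear parts of $f$ are reproduced exactly, so the only contribution to $(Q_y f)(x)$ is the leakage of the quadratic Taylor term $\tfrac12 u^\top \hcal f\, u$. Working in normal coordinates centered at $x \in \partial\mcal$, I would model the manifold locally as the half-space $\{u_1 \ge 0\}$ with $\eta_x = e_1$, so that the neighborhood of a center $y$ at rescaled normal height $\tilde v_1$ is the truncated ball $D_a = \{\|w\|\le 1,\, w_1 \ge a\}$ with $a = -\tilde v_1$. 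The crucial fact is that in the interior the leading residual cancels exactly upon integration, i.e. $\int_{B(0,1)} [\tfrac12 u^\top \hcal f\, u - \tfrac{1}{2(m+2)}\Delta f]\,du = 0$, which is what produces the interior fourth-order limit; at the boundary this cancellation is destroyed by the asymmetry of $D_a$.

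First I would reduce the pointwise value to a single one-dimensional integral. Writing $u = x-y$ and rescaling $v = h\tilde v$, $w = h\tilde w$, the residual evaluated at $x$ takes the form $(Q_y f)(x) = h^2 R(\tilde v_1)$, where $R$ is the affine-regression residual of the quadratic over $D_a$ read back at $x$. Because $D_a$ depends on the center only through its normal height $\tilde v_1$, the fit coefficients depend on $\tilde v_1$ alone, and every tangential ($j\ge 2$) term integrates to zero by oddness. This collapses $L_h^{LTSA} f(x)$ to $\Theta(h^{-2})\cdot V_m^{-1}\int_{B^+} R(\tilde v_1)\,d\tilde v$, so divergence is governed entirely by whether $\int_{B^+} R = 0$.

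The core computation is then the truncated-ball moment calculation. I would form the moment matrix $M = \int_{D_a}\phi\phi^\top$ in the basis $\phi = (1,w_1,\dots,w_m)$; the nonvanishing first moment $\mu_1(a) = \int_{D_a} w_1$ couples the constant and the $w_1$ coefficient into a $2\times 2$ block while the remaining coordinates stay diagonal. Solving $M(\hat a_0,\hat a_1)^\top = (b_0,b_1)^\top$ with $b_0 = \int_{D_a}\tfrac12 w^\top \hcal f\, w$ and $b_1 = \int_{D_a} w_1\,\tfrac12 w^\top \hcal f\, w$, one finds that only $(\hcal f)_{11} = \partial^2 f/\partial\eta_x^2$ and $\Delta f = \trace \hcal f$ survive, since all other entries of $\hcal f$ drop out by the tangential symmetry of $D_a$. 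Integrating $R(\tilde v_1)$ against the slab weight $V_{m-1}(1-\tilde v_1^2)^{(m-1)/2}$ over $\tilde v_1 \in [0,1]$ yields $\int_{B^+} R = P\,(\hcal f)_{11} + Q\,\Delta f$, and the claim is exactly that $Q/P = -\tfrac{m+1}{2}$, so the leading term is proportional to $\partial^2 f/\partial\eta_x^2 - \tfrac{m+1}{2}\Delta f$; the factor $m+1$ already surfaces in $\mu_1$, e.g. $\int_{\{\|u\|\le 1,\,u_1\ge 0\}} u_1\,du = V_{m-1}/(m+1)$, and propagates through the inversion. When this quantity is nonzero it dominates all corrections (variation of $\hcal f$, degree-$\ge 3$ Taylor terms, and volume-form factors each contribute only $O(h^{-1})$), forcing $L_h^{LTSA} f(x)\to\infty$; when it vanishes the leading term disappears and the operator stays bounded, matching the interior. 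A useful internal check is $m=1$, where $(\hcal f)_{11} = \Delta f$ and $1-\tfrac{m+1}{2}=0$, correctly predicting no boundary condition on a segment.

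Finally, for HLLE I would observe that $Q_y^{HLLE}$ projects onto the pure-quadratic part of the orthonormalized monomial basis over the same truncated neighborhood, so its leading boundary behavior is driven by the identical leakage of the degree-two Taylor term across the constant-plus-linear subspace; the degree-$\ge 3$ discrepancy between the two operators is precisely the higher-order smoother bias identified in the preceding equivalence theorem and is subdominant. Hence the same moment computation delivers the same constant $\tfrac{m+1}{2}$, giving one boundary condition for both methods. I expect the main obstacle to be the truncated-ball bookkeeping, namely evaluating $\mu_0,\mu_1$ and the second and third moments as functions of the truncation level $a$, inverting the coupled $2\times 2$ block, and carrying out the outer $\tilde v_1$-integral, all while controlling the remainder tightly enough to guarantee it is genuinely of lower order than the $\Theta(h^{-2})$ leading term; pinning down the exact value $\tfrac{m+1}{2}$ rather than merely the existence of such a constant is where the calculation must be done with care.
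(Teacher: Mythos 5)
Your proposal follows the same architecture as the paper's own proof: reduce to the quadratic Taylor term (constants and linear functions are reproduced exactly by the local fits), model the boundary locally as a half-space so that neighborhoods are truncated balls, use tangential symmetry to collapse the multivariate regression to a univariate problem in $u_1$ (your $2\times 2$ moment block is exactly the paper's reduction via the slice identity $E(u_i^2 \mid u_1) = (1-u_1^2)/(m+1)$), and integrate the residual at $x$ over centers $y$ to conclude divergence at rate $\Theta(h^{-2})$ unless a fixed linear combination of $\partial^2 f/\partial \eta_x^2$ and $\Delta f$ vanishes. The genuine gap is at the decisive step: the entire content of the theorem is the value of the constant, and you assert $Q/P = -\tfrac{m+1}{2}$ rather than derive it, conceding that this is "where the calculation must be done with care." The paper, by contrast, supplies the two facts that pin the ratio down: (i) the boundary residual of $u_1^2$ is strictly positive, because a linear fit to a convex function over a truncated neighborhood under-predicts at the edge, and (ii) each tangential quadratic $u_i^2$, $i>1$, contributes exactly as $-u_1^2/(m+1)$ after the regression removes constants.

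Moreover, your $m=1$ "internal check" is not a confirmation but a red flag. By (i), when $m=1$ every neighborhood containing a boundary point is truncated and yields a strictly positive residual of $u_1^2$ at that point, and there are no tangential terms available to cancel it; hence $L_h f(x) = \Theta(h^{-2}) f''(x)$ diverges unless $f''(x)=0$. So the boundary condition cannot be vacuous on a segment, which means a ratio of exactly $-\tfrac{m+1}{2}$ cannot emerge from the truncated-ball computation you set up: combining (i) and (ii) as the paper does yields a condition of the form $\partial^2 f/\partial\eta_x^2 = c_m\,\Delta f$ with $c_m$ forcing $f''=0$ at $m=1$. (You should be aware that the paper itself is inconsistent about this constant: the appendix restates the theorem with $\tfrac{m+1}{m+2}$ in place of $\tfrac{m+1}{2}$, and the closing display of its proof implies yet another relation, so reproducing the main-text value is the wrong target to aim for; what your writeup must actually deliver is the moment computation itself.) A secondary issue: for HLLE you appeal to the weak-operator-topology equivalence theorem, but that controls integrated quantities $\langle g, (L^{LTSA}_h - L^{HLLE}_h) f\rangle$, not the pointwise boundary value you need; the correct observation is that the difference $(H^{(2)}_y - I)f$ is $O(h^3)$ pointwise even at the boundary, so after the $h^{-4}$ scaling the HLLE--LTSA discrepancy is $O(h^{-1})$, subdominant to the $\Theta(h^{-2})$ divergence, and the one-directional claim survives.
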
 

We outline the proof, providing the details in the supplementary material.
The proof shows that boundary bias of the LTSA smoother is of order $\Omega(h^2 Tr(\hcal f(x) \Sigma))$ for some matrix $\Sigma$. This is of lower order than
the required scaling $h^{-4}$ for the functional to converge. 

Since locally linear and constant functions are in the null space by construction, only the behavior of the smoother on quadratic polynomials needs to be considered. 
The difficulty in analyzing the behavior arises from needing to average over multiple local regressions on different neighborhoods and dealing with a non-product measure.
By exploiting symmetry, the problem of a non-product measure is removed, and the multivariate regression problem can be reduced to a univariate linear regression with respect to $u_1$, where $u_1$ is the coordinate function in the direction $\eta_x$ orthogonal to the boundary. Due to the shape of the asymmetric neighborhood, the quadratic functions $u_i^2$, $ i > 1$, induce a nonzero conditional mean $E(u_i^2 | u_1) = (1-u_1^2)/(m+1)$. Since $u_1^2$ has a negative sign in this expression, the coefficients for $u_1^2$ and each $u_i^2$ can be set to cancel out the boundary bias to obtain the result.

\subsection{Partial equivalence of HLLE and Laplacian}
Although the Laplace-Beltrami operator induces a penalty on the gradient and appears fundamentally different from the Hessian penalty induced by HLLE and LTSA, they can be compared by squaring the Laplace-Beltrami operator to create a fourth-order differential operator that induces a slightly different penalty on the Hessian.

Consider the smoothness penalty  obtained by squaring the Laplace-Beltrami operator compared to the HLLE penalty.
\begin{align*}
&J_{\Delta^2}(f) = \langle f, \Delta^2 f\rangle = \langle \Delta f, \Delta f \rangle  
= \int Tr(\hcal f(z))^2 dV_g(z)  \\
&J_{HLLE}(f) = \int Tr\left( (\hcal f(z))^2 \right) dV_g(z),
\end{align*}
where the second equality follows from the self-adjointness of the Laplace-Beltrami operator. These penalties are identical for one-dimensional manifolds. 
However, a slightly different penalty is induced on multidimensional manifolds. Given global coordinates $g_1, \ldots, g_m$, the quadratic polynomial $g_1 g_2$ is in the null space of the twice-iterated Laplacian but not in the null space of the HLLE penalty.

\section{Alternate constructions}
We provide a few examples to illustrate how alternative constructions can address undesirable properties of existing methods and the implications of the changes. In particular, we propose a convergent and more stable variation of LLE. We find LLE and its variants behave similarly to a local linear regression smoother, replacing the Neumann boundary condition for Laplacians with a second-order boundary condition that admits linear functions. Furthermore, we generate a new Laplace-Beltrami approximation that generates a smoothing penalty that does not impose boundary conditions. The resulting operator is shown to not have well-separated eigenfunctions. 

\subsection{Low-dimensional representation LLE+}
LLE has the deficiency that the curvature of the manifold affects the resulting operator and smoothness penalty. In the worst case when the regularization parameter is very small, the embedding $\Phi : \mcal \to \mathbb{R}^d$ of the manifold in the ambient space lies close to the null space of the resulting LLE operator. In other words, a spectral decomposition of the LLE operator may simply recover a linear transformation of the ambient space.

Low dimensional representation \cite{goldberg2008ldrlle} LLE (LDR-LLE) is a modified version of LLE that removes some of the effect of curvature in the manifold by reconstructing each point based on its tangent space coordinates only. As this still cancels out the first two terms of a Taylor expansion, it is an approximation to the Laplace-Beltrami operator. 
The weights at $X_i$ are chosen to be in the subspace spanned by points in $\ncal_i$ in the ambient space and orthogonal to the tangent space, and thus, explicitly penalizes functions with curvature that matches the manifold. However, as the singular values for directions orthogonal to the tangent space decrease as $O(h^2)$ compared to $O(h)$ for the tangent space, the method is subject to numerical instability. As such we classify LDR-LLE as well as LLE as being based on a second-order smoother. To compensate for this instability, it adds a regularization term.

We propose a further modification, LDR-LLE+, which removes the artificial restriction on the weights to the span of the points in the neighborhood and generates a first-order smoother. The weights are given by
\begin{align}
w_i &\propto 1_{\ncal_i} - U_{1:m} U_{1:m}^T 1,
\end{align}
where $U_{1:m}$ are the top left singular vectors of $X_{\ncal_i}$ after centering on $X_{i}$.

Another variant of LLE is modified LLE (mLLE) \cite{zhang2007mlle}. While LDR-LLE explicity constructs a local smoother that can be expressed by linear functions in the tangent coordinates and a bias operator in which linear functions are in its null space, mLLE achieves a similar effect by adding vectors orthogonal to the tangent space to the bias operator in order to remove non-tangent directions from the null space.

\subsection{Local Linear Regression}
A simple alternative construction is to directly use a local linear regression (LLR) smoother to approximate the Laplace-Beltrami operator. 
Since only functions linear in the normal coordinates should be included in the null space of the bias operator, each local regression is performed on the projection of the neighborhood into its tangent space. \citet{cheng2013local} propose this linear smoother in the context of local linear regression on a manifold and note its connection to the Laplace-Beltrami operator without analyzing the convergence of the associated operator. 
Our contribution is determining the boundary conditions. 

In particular, we study the boundary behavior for the operator $L^{LLR*}L^{LLR}$ where $L^{LLR*}$ is the adjoint of  
$L^{LLR}$. The resulting boundary conditions have the same form as those for LTSA, albeit with different constants, and admit linear functions as eigenfunctions. 

\begin{theorem}
	Let $S_h$ be the expected continuous extension for a local linear regression smoother with bandwidth $h$ and $L_h = h^{-2}Vol_h^{-1}(I - S_h)$.
	Then there is some $\beta > 0$ such that for any $f \in C^{\infty}(\mcal)$ and $x \in \partial \mcal$,
	$L_h^*L_h f(x)$ converges as $h \to 0$ only if	
	\begin{align}
	\beta \frac{\partial^2 f}{\partial \eta_x^2}(x) &= (\Delta f)(x),
	\end{align}
	where $\eta_x$ is the vector normal to the boundary at $x$.
\end{theorem}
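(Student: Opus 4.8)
The plan is to reduce the theorem to a bias computation for the composite operator $L_h^* L_h$, following the same template as the boundary analysis for LTSA. Because the nonlinear embedding is read off from the singular vectors of $L_h$ — equivalently the eigenfunctions of $L_h^* L_h$ — the object whose boundary behavior matters is $L_h^* L_h$, not $L_h$ itself. Since $L_h = h^{-2} Vol_h^{-1}(I - S_h)$ converges in the interior to a multiple of the Laplace--Beltrami operator, $L_h^* L_h = h^{-4} Vol_h^{-2}(I - S_h^*)(I - S_h)$ converges in the interior to a multiple of $\Delta^2$, a fourth-order operator. The strategy is then: (i) pin down the interior and boundary bias of the LLR smoother $S_h$; (ii) show that the cancellations that make $(I - S_h^*)(I - S_h) f$ of order $h^4$ in the interior fail at $\partial\mcal$, so that the composite boundary term is of strictly lower order in $h$; and (iii) conclude that after the $h^{-4}$ scaling this term forces divergence unless its leading coefficient — a fixed linear combination of $\partial^2 f/\partial\eta_x^2(x)$ and $\Delta f(x)$ — vanishes, which is exactly the asserted boundary condition.

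First I would write $S_h$ explicitly as a weighted least-squares fit of an affine function in the normal coordinates $u$ over the neighborhood $\ncal_x$, so that $S_h$ reproduces constant and linear functions and, by the Taylor expansion in Eq.~\ref{eqn:operator poly}, its leading bias comes from the quadratic term: $(I - S_h)f(x) = -Tr\big(\hcal f(x)\,\Sigma_h(x)\big) + o(h^2)$, where $\Sigma_h(x)$ is the regression-adjusted second-moment matrix of $\ncal_x$. In the interior $\ncal_x = B(0,h)$ is symmetric, $\Sigma_h(x)$ is isotropic, and the bias is a multiple of $h^2 \Delta f(x)$, which confirms the $h^{-2}$ scaling and the interior limit. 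On the boundary $\ncal_x = B(0,h)\cap\mcal$ is a truncated (half-)ball; choosing coordinates with $u_1$ along $\eta_x$ keeps the neighborhood symmetric in $u_2,\dots,u_m$ but asymmetric in $u_1$. Here I would reuse the symmetry reduction from the LTSA proof: the non-product measure is removed by symmetry, and the conditional mean $E(u_i^2 \mid u_1) = (1-u_1^2)/(m+1)$ reduces the multivariate quadratic bias to a univariate computation, yielding a boundary bias of the form $h^2\big(a\,\partial^2 f/\partial\eta_x^2(x) + b\,\Delta f(x)\big) + o(h^2)$ for explicit constants $a,b$.

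Next I would analyze the adjoint. The crucial asymmetry is that although the LLR smoother $S_h$ reproduces affine functions (so $\int S_h(x,y)\phi(y)\,dV_g(y) = \phi(x)$ for affine $\phi$), its adjoint does not: the kernel integrated in its first argument, $(S_h^* 1)(x) = \int_{B(x,h)\cap\mcal} S_h(z,x)\,dV_g(z)$, deviates from $1$ by an $O(h)$ amount once $x$ is within $h$ of $\partial\mcal$, because the regression designs $\Sigma_h(z)$ of the overlapping neighborhoods centered at nearby $z$ are themselves truncated. In the interior, translation symmetry makes this defect vanish to the relevant order, which is precisely what produces the clean $O(h^4)$ cancellation in $(I - S_h^*)(I - S_h)f$. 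At the boundary this defect survives, so writing $g := (I - S_h)f = O(h^2)$ and expanding $(I - S_h^*)g(x) = g(x) - \int S_h(z,x) g(z)\,dV_g(z)$ shows the composite boundary term to be of order strictly lower than $h^4$. After multiplication by $h^{-4}Vol_h^{-2}$ it therefore diverges, and its coefficient is a linear combination of $\partial^2 f/\partial\eta_x^2(x)$ and $\Delta f(x)$; equating it to zero and collecting the constants gives $\beta\,\partial^2 f/\partial\eta_x^2(x) = \Delta f(x)$, with $\beta>0$ inherited from the positivity of the second moments of the truncated neighborhood.

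The main obstacle is this adjoint step: the adjoint has no interpretation as a local linear regression, so the design-adaptive \emph{good boundary bias} property of $S_h$ cannot be invoked for $S_h^*$. I must instead compute the weight kernel $S_h(z,x)$ as a genuine function of both arguments and integrate over $z \in B(x,h)\cap\mcal$ while tracking how $\Sigma_h(z)$ varies as $z$ ranges from the boundary inward — this is exactly the \emph{average over multiple local regressions on different neighborhoods with a non-product measure} difficulty already noted for LTSA, now compounded by the adjoint. Controlling these overlapping, distance-to-boundary-dependent designs to the order needed to extract the constants $a,b$ (and hence $\beta$) is where the real work lies; the symmetry reduction in $u_2,\dots,u_m$ and the conditional-mean identity are the tools that make the bookkeeping tractable.
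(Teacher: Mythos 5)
Your proposal follows the same architecture as the paper's proof: factor the boundary behavior of $L_h^*L_h$ into (i) the boundary bias of the LLR smoother itself, which by design-adaptivity stays at order $h^2$ but with coefficient changing from a multiple of $\Delta f$ to a linear combination $a\,\partial^2 f/\partial\eta_x^2 + b\,\Delta f$ (the paper simply invokes Theorem 2.2 of \citet{ruppert1994multivariate} here, plus the symmetry of the truncated ball, where you propose re-deriving it via the LTSA-style conditional-mean reduction --- an acceptable substitution), and (ii) a defect of the adjoint at the boundary which, multiplied against (i) under the $h^{-4}$ scaling, forces divergence unless the linear combination vanishes.

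The genuine gap is in step (ii), which you yourself flag as ``where the real work lies'' but do not carry out --- and the one quantitative claim you make about it is wrong in two ways. First, the direction of the bound: to force divergence you need a \emph{lower} bound on the adjoint defect, not the upper-bound-style statement that $(S_h^*1)(x)$ deviates from $1$ by ``an $O(h)$ amount.'' If the defect were in fact $o(h^2)$, the composite term would be $o(h^4)$, the $h^{-4}$ scaling would give a finite limit, and no boundary condition would be imposed at all; without a lower bound the theorem does not follow. Second, the order itself: the defect is $\Theta(1)$, not $\Theta(h)$. On a one-dimensional manifold with boundary point at $0$, the LLR kernel for $0\le y\le h$ is $s_h(y,0)=(4h-2y)/(y+h)^2$, so the adjoint mass is $\int_0^h s_h(y,0)\,dy = 3-2\ln 2 \approx 1.61$, a constant-order deviation from $1$: the linear-correction part of the kernel, which integrates to zero in the interior by symmetry, contributes $(u(y)-\bar u_y)^T M_y^{-1}\bigl(u(x)-\bar u_y\bigr) = \Theta(h)\cdot\Theta(h^{-2})\cdot\Theta(h)=\Theta(1)$ once neighborhoods are truncated. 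Establishing this lower bound is essentially the entire content of the paper's proof: it computes the adjoint kernel $L_h\delta_x$ explicitly, using an orthogonalized basis to reduce to a univariate regression, isolates the nonvanishing neighborhood-mean terms $\mu_y>0$, and concludes $\langle L_h\delta_x, I(\cdot\in\ncal_x)\rangle = \Omega(h^{-2})$, i.e., $|(L_h^*g)(x)|\to\infty$ for any smooth $g$ with $g(x)\neq 0$; combined with (i) this gives $(L_h^*L_hf)(x)=\Omega(h^{-2})\cdot\Omega(1)$ unless $\beta\,\partial^2 f/\partial\eta_x^2(x) = (\Delta f)(x)$. Your proposal correctly identifies the object that must be computed, but leaves precisely that computation --- the step carrying the theorem --- undone, with an incorrect guess in its place.
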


This result is obtained by examining the behavior of the adjoint $L_h^*$ by reducing it to computing the average residual over a set of univariate linear regressions. The result is that the adjoint has boundary bias $\Omega(g(x))$ when applied to a function $g$. Combined with existing analyses for the boundary bias of $L^{(LLR)}$ \cite{ruppert1994multivariate} gives $(L_h^* L_h f)(x) = \Omega(h^{-2})\Omega(1) \to \infty$ if the boundary condition is not met. Interestingly, this yields a Dirichlet boundary condition when swapping the order the operators are applied $L_h L_h^*$ as $h \to 0$.

We note that this procedure is similar to LDR-LLE. The row weights must sum to one to preserve constant functions, $S_n 1 = 1$, and each point's tangent coordinates are perfectly reconstructed since the coordinate functions are linear functions of themselves. Empirically we found these perform very similarly as shown in Figure \ref{fig:swisshole}, but Local Linear Regression was more likely to exclude the quadratic interaction polynomial from its bottommost eigenfunctions.

\vspace{-0.1cm}
\subsection{Laplacian without boundary conditions}
For existing Laplacian approximations, the smoothness penalty $f^T L_n f \to \|\nabla f\|^2$ is only guaranteed to converge when $f$  satisfies the boundary conditions imposed by $L$. Furthermore, the only existing Laplace-Beltrami approximation that yields a positive semidefinite matrix and non-negative smoothness penalty is the graph Laplacian.

We derive a construction, the {\em Coefficient Laplacian}, that is both self-adjoint and guarantees convergence of the smoothness penalty for all $C^{\infty}(\mcal)$ functions. 
This is achieved by explicitly estimating the gradient with a local linear regression. For each neighborhood $\ncal_i$, estimate normal coordinates $U_{\ncal_i}$ with respect to $X_i$.
Define 
\begin{align}
S_{\ncal_i} &= (\tilde{U}_{\ncal_i}^T\tilde{U}_{\ncal_i})^{-1} \tilde{U}_{\ncal_i}^T (I - 1 e_i^T)\\
L &= \sum_i A_{\ncal_i} S_{\ncal_i}^T S_{\ncal_i} A_{\ncal_i}^T.
\end{align}
Thus, $S_{\ncal_i} Y_{\ncal_i}$ yields the coefficients for a linear regression of the centered $Y_{\ncal_i} - 1 Y_i$ on $U_{\ncal_i}$ without the intercept term. 
Thus, $f^TLf = \sum_i f^T Q_{\ncal_i} f \to \sum_i \|\nabla f\|^2$
as $n \to \infty$ and $h \to 0$ sufficiently slowly.

The resulting operator has desirable behavior when used as a smoothing penalty. However, its eigenfunctions are useless for nonlinear dimensionality reduction. This occurs for two reasons. First, removal of boundary conditions yields an operator without a pure point spectrum. 
For example, in one dimension, the resulting operator solves the differential equation $y'' = -\lambda y$ which has solutions $y = cos(\sqrt{\lambda}x)$ for all $\lambda > 0$. Thus, it is unclear what solutions are  picked out by an eigendecomposition of the discrete approximations. Second, one can show that $L_h f$ does not converge uniformly at the boundary even when the functional $f^T L_h f$ converges.
As a result, the eigenvectors of the discrete approximations are uninformative. The resulting deficiencies are illustrated in Figure \ref{fig:spectra}.

\setlength{\belowcaptionskip}{-0.2cm}
\begin{figure}
	\vspace{-0.2cm}
	\includegraphics[height=1.5in, width=3in]{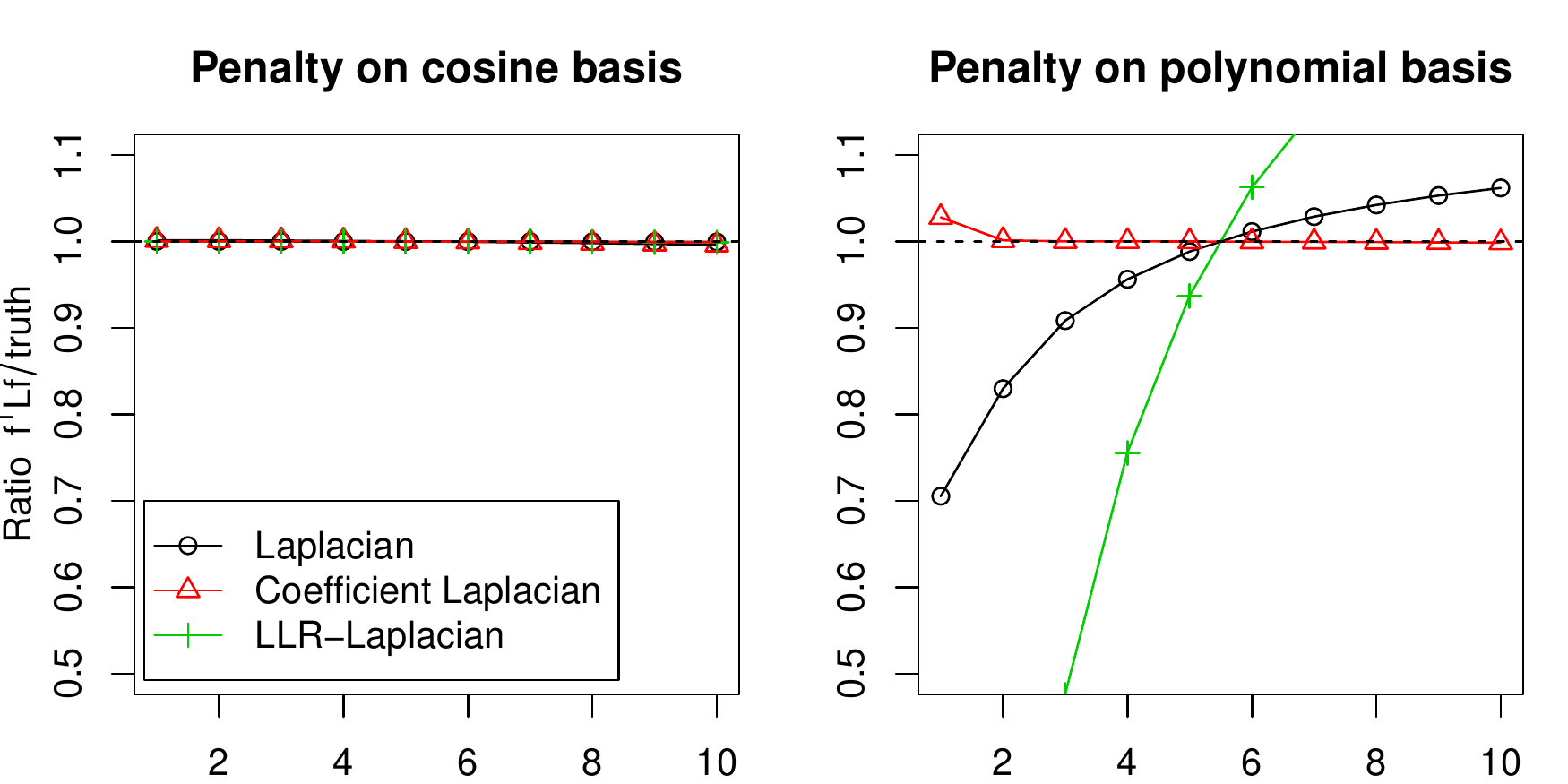} 
	\vspace{-0.2cm}
	\caption{When restricted to cosine functions which satisfy the Neumann boundary condition, the smoothness penalty induced by different Laplacian approximations are all similar. For other functions, $sign(x) |x|^{(i+1)/3}$, the smoothness penalty can significantly differ. The Coefficient Laplacian remains close to the true penalty even in this case. }
	\label{fig:smoothness}
\end{figure}

\vspace{-0.15cm}
\section{Discussion}
This paper explores only one aspect of the connection between manifold learning and linear smoothing, namely the analysis of manifold learning methods and ways to use that analysis to improve existing NLDR methods.

However, this connection can be exploited in many additional ways. For instance, the methods developed for manifold learning can be used for linear smoothing. In particular, the linear smoother associated with LTSA can be used as a smoother for non-parametric regression problems. A simple consequence of our analysis is that the smoother can achieve $O(h^4)$ bias in the interior even though it only uses a first-order local linear regression. 
By contrast, local linear regression yields a bias of $O(h^2)$ in the interior.
The connection can also be exploited for multi-scale analyses. In the case of the Laplacian, the weighted bias operator $L$ is the infinitesimal generator of a diffusion with $P_t = exp(-Lt)$ defining its transition kernels and can be used to generate Diffusion wavelets \cite{coifman2006diffusionwavelets}. Other bias operators can be substituted to generate orthogonal bases with local, compact support.
Another possibility is to combine existing linear smoothers with manifold learning penalties to generate new smoothers and NLDR techniques by using the theory of generalized additive models \cite{buja1989linear}.

\bibliography{ling2}
\bibliographystyle{icml2018}

\onecolumn
\appendix
\appendixpage

\section{Convergence to differential operator}
\begin{theorem}
	Let $S_n$ be a sequence of linear smoothers where the support of $S_n(x,\cdot)$ is contained in a ball of radius $h_n$.
	Further assume that the bias in the residuals $h_n^{-k}(I-S_n)f = o(1)$ for some integer $k$
	and all $f \in \ccal \subset C^{\infty}(\mcal)$. Then if $h_n \to 0$ and $h_n^{-k}(I-S_n)$ converges to a bounded linear operator as $n \to \infty$, then it must converge to a differential operator of order at most $k$ on the domain $\ccal$.
\end{theorem}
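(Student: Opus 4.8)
The plan is to substitute the Taylor expansion of Eq.~\ref{eqn:operator poly} into the rescaled residual operator $L_n := h_n^{-k}(I-S_n)$ and simply read off a differential operator from the smoother's local moments. Two properties already in hand carry the whole argument: the locality of $S_n$ (its representer $S_n(x,\cdot)$ is supported in $B(x,h_n)$) and its boundedness as an operator. Together these let me treat $(S_n f)(x)$ as a finite linear functional of the Taylor jet of $f$ at $x$, with the higher-order tail suppressed by powers of $h_n$.

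First I would fix $x\in\mcal$, write $f$ in normal coordinates about $x$, and expand as in Eq.~\ref{eqn:operator poly} to some order $r>k$. Writing $c_\alpha^{(n)}(x) := \tfrac{1}{\alpha!}\bigl(S_n(\cdot-x)^\alpha\bigr)(x)$ for the moment functionals, this gives
\begin{align*}
(I-S_n)f(x) &= \bigl(1-c_0^{(n)}(x)\bigr)f(x) - \sum_{1\le|\alpha|\le k} c_\alpha^{(n)}(x)\,D^\alpha f(x) \\
&\quad - \sum_{k<|\alpha|\le r} c_\alpha^{(n)}(x)\,D^\alpha f(x) + \O(h_n^r).
\end{align*}
The crucial estimate is that locality plus boundedness force $|c_\alpha^{(n)}(x)| \le C\|S_n\|\,h_n^{|\alpha|}$: on $B(x,h_n)$ the monomial $(\cdot-x)^\alpha$ is bounded in magnitude by $h_n^{|\alpha|}$, and since $S_n(x,\cdot)$ is supported there the bounded operator sees nothing outside this ball. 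Multiplying through by $h_n^{-k}$, every term with $|\alpha|>k$ becomes $\O(h_n^{|\alpha|-k})=o(1)$, and the Taylor remainder contributes $\O(h_n^{r-k})=o(1)$. Hence
\begin{align*}
L_n f(x) &= \sum_{|\alpha|\le k} b_\alpha^{(n)}(x)\,D^\alpha f(x) + o(1),
\end{align*}
where $b_0^{(n)}=h_n^{-k}(1-c_0^{(n)})$ and $b_\alpha^{(n)}=-h_n^{-k}c_\alpha^{(n)}$ for $1\le|\alpha|\le k$. The residual-bias hypothesis is what pins the decay of $(I-S_n)f$ at the scale $h_n^k$, which is precisely what keeps these rescaled coefficients bounded rather than divergent.

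It remains to extract the coefficients and show they converge. Because there are only finitely many multi-indices with $|\alpha|\le k$, I would test $L_n$ at the fixed point $x$ against a matching finite family of functions, e.g.\ functions agreeing with the monomials $(\cdot-x)^\beta$, $|\beta|\le k$, near $x$, chosen so the jet matrix $\bigl[D^\alpha g_\beta(x)\bigr]$ is invertible. For each $g_\beta$ the convergence hypothesis gives $(L_n g_\beta)(x)\to (L_0 g_\beta)(x)$, so the linear system with fixed invertible coefficient matrix forces $b_\alpha^{(n)}(x)\to b_\alpha(x)$ for every $\alpha$. Passing to the limit in the displayed identity for an arbitrary $f\in\ccal$ then yields $(L_0 f)(x)=\sum_{|\alpha|\le k} b_\alpha(x)\,D^\alpha f(x)$, i.e.\ $L_0$ is the differential operator $\sum_{|\alpha|\le k} b_\alpha D^\alpha$, of order at most $k$.

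The step I expect to be the main obstacle is reconciling the $o(1)$ remainder control and the coefficient extraction with the precise topology in which $L_n$ is assumed to converge: the moment bound $|c_\alpha^{(n)}(x)|\le C\|S_n\| h_n^{|\alpha|}$ must hold uniformly enough in $x$ that the tail genuinely vanishes in that topology, and the evaluation map $g_\beta\mapsto(L_n g_\beta)(x)$ used to solve the linear system must be continuous for it. Conceptually this is a quantitative, scaled incarnation of Peetre's theorem that a local linear operator is a differential operator; the added content here is that the order is capped by the rescaling exponent $k$ through the moment bounds, with the residual-decay assumption preventing the sub-order-$k$ terms from blowing up.
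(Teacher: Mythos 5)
Your proposal takes essentially the same route as the paper's proof: both localize the action of the smoother via the support of $S_n(x,\cdot)$, use boundedness to conclude that monomials of degree $\ell > k$ contribute only $O(h_n^{\ell-k}) \to 0$ after the $h_n^{-k}$ rescaling, and then apply the limit operator to a Taylor expansion with remainder to read off a differential operator of order at most $k$. Your explicit coefficient-extraction step via test functions merely spells out what the paper leaves implicit in its final sentence, so the two arguments coincide in substance.
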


\begin{proof}
	Let $S_\infty$ is be the limit of $h^{-k}(I-S_n)$. 
	Let $q_\ell$ be any monomial with  degree $\ell > k$ and $I_{x,n}$ be some smooth function that is $1$ on a ball of radius $h_n$ around $x$.
	Then the pointwise product $\| q_\ell \cdot I_{x,n}\| = O(h_n^\ell)$.
	Since $S_\infty$ is bounded, $(S_n q_\ell)(x) = (S_n (I_{x,n} \cdot q_\ell))(x) = O(h_n^{\ell - k}) \to 0$ . 
	Furthermore, the convergence is uniform over all $x$.
	Thus, the behavior is determined on a basis of polynomials of degree at most $k$. 
	Applying $S_\infty$ to a Taylor expansion with remainder gives the desired result.
\end{proof}

\section{Equivalence of HLLE and LTSA}
\begin{theorem}
	$L_h^{LTSA} - L_h^{HLLE} \to 0$ as $h\to 0$ in the weak operator topology of $C^\infty(\mcal)$ equipped with the $L_2$ norm.
\end{theorem}
\begin{proof}
	The local operators $Q^{HLLE}_i$ for HLLE 
	can be described more succinctly in 
	terms of the difference of two linear smoothers.
	Let $H_x^{(2)}$ be hat matrix for a local quadratic regression in the neighborhood $\ncal_x$. 
	\begin{align}
	Q^{HLLE}_x &= \alpha_h (H^{(2)}_x -  H^{(1)}_x) \\
	&= \alpha_h (Q^{LTSA}_x + (H^{(2)}_x - I_{\ncal_x})).
	\end{align}
	Let $R_x = H^{(2)}_x - I_{\ncal_x}$. 
	For any $f,g \in C^{\infty}(\mcal)$, 
	\begin{align}
	\alpha_h g^T R_x f &= \alpha_h \langle R_x g, R_x f\rangle
	\leq O(h^{-4+3+3})
	\end{align}
	since quadratic terms and lower can be removed from the Taylor expansion.
\end{proof}

\section{Boundary behavior of HLLE and LTSA}

\begin{theorem}
	\label{thm:HLLE boundary}
	For any $x \in \partial \mcal$ and function $f \in C^{\infty}(\mcal)$, $L^{LTSA}_h f(x) \to \infty$ as $h\to 0$ unless
	\begin{align}
	\label{eqn:HLLE boundary}
	\frac{\partial^2 f}{\partial \eta_x^2}(x) &= \frac{m+1}{m+2} (\Delta f)(x) \quad \forall x \in \partial \mcal
	\end{align}
	where $\eta_x$ is the tangent vector orthogonal to the boundary.
\end{theorem}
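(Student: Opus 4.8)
The plan is to work directly from the continuous-extension formula $(L^{LTSA}_h f)(x) = \frac{\alpha_h}{Vol_h}\int_{\{y\,:\,x\in\ncal_y\}} (Q^{LTSA}_y f)(x)\, dV_g(y)$, in which, for each center $y$, the quantity $(Q^{LTSA}_y f)(x) = f(x) - (H_y f)(x)$ is the residual at $x$ of a local affine fit $H_y f$ over the neighborhood $\ncal_y$. Since $Q^{LTSA}_y$ annihilates functions that are affine in the normal coordinates around $y$, a Taylor expansion of $f$ at $y$ shows that $(Q^{LTSA}_y f)(x)$ is, to leading order, a quadratic form in $\hcal f(x)$, with an $O(h^3)$ remainder that is uniform over $f \in C^\infty(\mcal)$ by compactness of $\mcal$. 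First I would fix orders of magnitude: the continuous-extension analysis already forces $\alpha_h = \Theta(h^{-4})$ for a nontrivial interior limit, and in the interior reflection symmetry makes the averaged residual $\Theta(h^4)$. At a boundary point this symmetry in the $\eta_x$-direction is broken, so I expect the averaged residual to degrade to $\Theta(h^2\, Tr(\hcal f(x)\Sigma))$ for a fixed symmetric matrix $\Sigma$ depending only on $m$. Multiplying by $\alpha_h$ then gives $L^{LTSA}_h f(x) = \Theta(h^{-2}\, Tr(\hcal f(x)\Sigma))$, which diverges unless $Tr(\hcal f(x)\Sigma)=0$; identifying $\Sigma$ turns this into the claimed boundary condition.

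Next I would set up normal coordinates $u=(u_1,\dots,u_m)$ at $x$ with $u_1$ aligned with $\eta_x$, and reduce the computation of $\Sigma$ to evaluating the averaged residual on each quadratic monomial $u_iu_j$. This is where the main obstacle lives: the outer integral over centers $y$ ranges over a half-ball, while each inner neighborhood $\ncal_y$ is itself a ball truncated by $\partial\mcal$, so neither the center measure $dV_g(y)$ nor the inner regression measure factorizes. The key simplification is that the entire construction retains the rotational symmetry in the boundary-tangent coordinates $u_2,\dots,u_m$. Hence every cross term ($i\neq j$) averages to zero and $\Sigma$ is diagonal with $\Sigma_{22}=\cdots=\Sigma_{mm}$, leaving just two unknown entries $\Sigma_{11}$ and $\Sigma_{22}$. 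Moreover, integrating out the symmetric directions collapses each inner multivariate affine regression to a univariate linear regression of the conditional mean $E(u_i^2\mid u_1)$ against $\{1,u_1\}$, removing the non-product-measure difficulty.

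The final ingredient is the conditional-mean computation $E(u_i^2\mid u_1) = (1-u_1^2)/(m+1)$ for $i\geq 2$, whose additive constant is absorbed by the regression intercept, so that only the $-u_1^2/(m+1)$ term survives. This ties the residual of each tangential square $u_i^2$ to that of $u_1^2$ with a relative weight of $-1/(m+1)$, the crucial negative sign forcing the $\eta_x$-direction and the tangential directions to enter $Tr(\hcal f(x)\Sigma)$ with opposite signs. Recombining the diagonal contributions, and re-expressing $\sum_{i\geq 2}(\hcal f)_{ii}$ through $\Delta f$, then yields a boundary bias proportional to $\frac{\partial^2 f}{\partial\eta_x^2}(x) - c_m (\Delta f)(x)$ for a dimension-dependent constant $c_m$, and setting it to zero gives the stated condition $\frac{\partial^2 f}{\partial\eta_x^2}(x) = \frac{m+1}{m+2}(\Delta f)(x)$.

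I expect the genuinely hard and error-prone part to be pinning down the two entries of $\Sigma$ — equivalently the exact constant $c_m = \frac{m+1}{m+2}$ — rather than the qualitative divergence, which follows readily from the order counting. This requires honest bookkeeping of three effects that all enter at the same $O(h^2)$ order as the leading boundary bias: the changing geometry of the truncated neighborhoods $\ncal_y$ as $y$ sweeps the half-ball; the volume-form and distance corrections $dV_g(u) = 1 + q_2(u) + O(h^3)$ and $\|y-x\|^2 = \|u(y)\|^2 + \tilde{q}_2(u(y)) + O(h^3)$ from the local-coordinate expansions; and the double averaging over both the outer center measure and the inner regression. The symmetry reduction renders all of these one-dimensional integrals in $u_1$, but it is in matching the two surviving numerical constants that the precise coefficient must be extracted, so this is where I would concentrate the care and sanity-check against small $m$ and against the already-established interior order.
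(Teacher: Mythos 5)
Your proposal follows essentially the same route as the paper's own proof: the same continuous-extension and order-counting step (forcing $\alpha_h = \Theta(h^{-4})$, with interior averaged residual $O(h^4)$ and boundary residual $\Theta(h^2)$), the same symmetry reduction killing odd monomials and cross terms $u_iu_j$, and the same key identity $E(u_i^2\mid u_1) = (1-u_1^2)/(m+1)$ that collapses the multivariate regression to a univariate one with the constant absorbed into the intercept.

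However, the one step you defer --- ``recombining the diagonal contributions \dots gives the stated condition'' --- is exactly where the argument, as you state it, does not close. Write $R_{ii}$ for the averaged boundary residual applied to the monomial $u_i^2$, so the boundary bias is proportional to $\sum_i R_{ii}\,\partial^2 f/\partial u_i^2(x)$. Your conditional-mean identity gives the relative weight $R_{ii} = -R_{11}/(m+1)$ for $i \geq 2$, hence the bias is proportional to
\begin{align*}
\frac{\partial^2 f}{\partial \eta_x^2}(x) - \frac{1}{m+1}\sum_{i=2}^m \frac{\partial^2 f}{\partial u_i^2}(x)
\;=\; \frac{m+2}{m+1}\,\frac{\partial^2 f}{\partial \eta_x^2}(x) - \frac{1}{m+1}(\Delta f)(x),
\end{align*}
whose vanishing is equivalent to $\partial^2 f/\partial \eta_x^2(x) = \frac{1}{m+2}(\Delta f)(x)$, not $\frac{m+1}{m+2}(\Delta f)(x)$. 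To land on the stated constant you would instead need $R_{ii} = -(m+1)R_{11}$, i.e., the reciprocal weight, and nothing in your (or the paper's) computation produces that. So either the relative weight requires a different justification, or the constant you assert cannot be extracted from the steps you give. You are in good company: the paper's own appendix concludes with the display $0 = \partial^2 f/\partial u_1^2 + \frac{m+1}{m-1}\sum_{i\geq 2}\partial^2 f/\partial u_i^2$, which is consistent neither with its own conditional-mean computation nor with the theorem as stated, and the main text states yet a third constant, $\frac{m+1}{2}$. But since you yourself identified the constant extraction as the error-prone core of the proof, it is the part you must actually carry out, and as written your plan yields a different answer from the one you claim it gives.
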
 

To do this, we show that the required scaling by $h^{-4}$ for the functional to converge causes the value at the boundary to go to $\infty$ when this condition is not met.

Let $x \in \mcal$. Assume $h$ is sufficiently small so that there exists a normal coordinate chart at $x$ such that $B(x, 2h)$ is contained in the neighborhood for the chart. 
Furthermore, choose the normal coordinates such that the first coordinate $u_1$ corresponds to the direction $\eta_x$ orthogonal to the boundary and pointing inwards.

Consider the behavior of $L_h$ on the basis of polynomials expressed in this coordinate chart.
Denote by $Q_y$ the hat operator for a linear regression on linear and constant functions restricted to a ball of radius $h$ centered on $y$.
Note that any polynomial of degree at most 1 is in the null space of any $Q_y$ by construction. 
Likewise, by symmetry, any polynomial $u_i u_j$ with $j < i$ or of odd degree is in the null space of $Q_y$. This leaves only the polynomials $u_i^2$. 

Each $(Q_y f)(z)$ computes the residual at $z$ from regressing $f$ against linear functions in the neighborhood at $y$.
The residual $r_i = (Q_y u_i^2)(0) = (Q_y (u_i - y_i)^2)(0)$ for all $y \in B(0,h)$
since linear functions are in the null space.
By symmetry of the neighborhoods in the interior,
$r_y = (Q_{0} u_i^2)(-y)$ whenever $i > 1$.
In other words, in the interior of the manifold, averaging over the multiple regressions is equivalent to averaging over the residuals of a single regression. 
Since the expected residual is always $0$ for least squares regression,
$(L_h f)(x) = O(h^4)$ for any $x \in Int(\mcal)$.

More intuitively, one wishes to evaluate the  residual at $z$ over all neighborhoods $\ncal_y$ that include $z$. 
Because linear functions are in the null space, this depends only on the curvature of the response $f$ at $x$.
Furthermore, since all neighborhoods have essentially identical shape, the hat matrices for every neighborhood are also identical,
so the residual only depends on the offset $z - y$. Averaging over the different $y$ is equivalent to fixing a single regression at $B(z,h)$ and averaging over the residuals at all offsets $y \in B(z,h)$.

However, the boundary behavior is very different since the shape of the neighborhoods can change. 
First, consider the quadratic polynomial $u_1^2$ in the direction orthogonal to the tangent space on the boundary. 
Since the residual is always evaluated at the boundary for a linear regression and $u_1^2$ is convex, the residual is necessarily positive. Figure \ref{fig:convex residual} provides an intuitive illustration of this.
Furthermore, since evaluating 
$(Q^{(\alpha h)}_y (\alpha h)^{-2} u_1^2)(x) = O( (\alpha h)^{-4})$
It follows that $(L_h f)(x) =  \alpha h^2 \partial^2 f / \partial u_1^2(x) + O(h^3)$ for some non-zero constant $\alpha$.

For polynomials $u_i^2$ where $i > 1$, we note that any linear 
regression on linear functions $u_j$ has corresponding coefficients $\beta_j = 0$ for all $j > 1$ by symmetry of the neighborhood. 
Thus, the prediction can be reduced to a linear regression where the response is the conditional mean $E(u_i^2 | u_1)$ given the neighborhood. Integrating over slices of an $m$-sphere gives
\begin{align}
E(u_i^2 | u_1) = (m+1)^{-1}(1-u_1^2).
\end{align}
Since constant terms are removed by the regression and since summing over the residual function for a given value of $u_1$ gives $\E (u_i^2 - \hat{u}_i^2 | u_1) = 
\E (u_i^2 -  \E(u_i^2 | u_1) | u_1)  + \E(\E(u_i^2 | u_1) - \hat{u}_i^2 | u_1) = 
\E(\E(u_i^2 | u_1) - \hat{u}_i^2 | u_1)$, 
the summed residuals are the same as those obtained for the function $-u_1^2 / (m+1)$. 

Thus, for any $x \in \partial \mcal$, $(L_h f)(x) \to \infty$ unless $0 = \partial^2 f / \partial u_1^2 + \frac{m+1}{m-1} \sum_{i=2}^m \partial^2 f / \partial u_i^2$ which is equivalent to equation \ref{eqn:HLLE boundary}.

\begin{figure}[H]
	\includegraphics[width=4in]{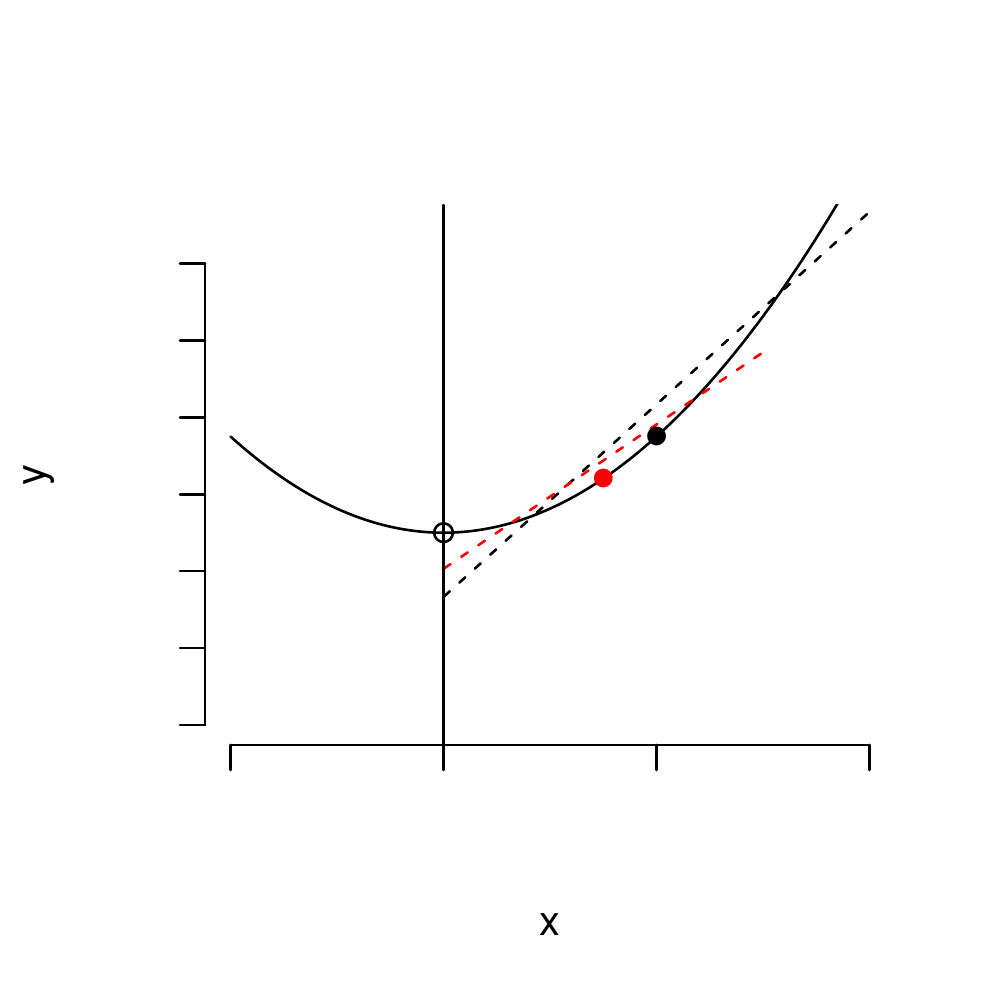}
	\caption{Figure illustrating that the residual is always positive when evaluating a linear regression on the boundary.
The dots are the centers defining the neighborhood on which regression is performed. 		The vertical line represents the boundary of the manifold that the neighborhoods do not cross. The dashed lines are the regression fits. It is easy to see that the residuals at the boundary are always strictly positive.}
\label{fig:convex residual}
\end{figure}

We checked this boundary condition using simulation on a manifold isomorphic to a rectangle.
We took 10 estimated eigenfunctions and computed their Hessian at a point on the boundary.
This generates a $10 \times 3$ matrix consisting of the estimates $\frac{\partial^2 f}{\partial u_1^2}, \frac{\partial^2 f}{\partial u_2^2}, \frac{\partial^2 f}{\partial u_1\partial u_2}$.
We take the svd of this matrix. The distribution of the top, middle, and bottom singular value is shown in figure \ref{fig:eigenfun svd}. There is one eigenvalue clearly close to 0 that represents the boundary condition.
The average bottom right singular vector is given in table \ref{tbl:eigenfun hessian values} and show fairly good correspondence to the theoretical calculations on a modestly fine grid.

\begin{figure}[H]
	\includegraphics[width=4in]{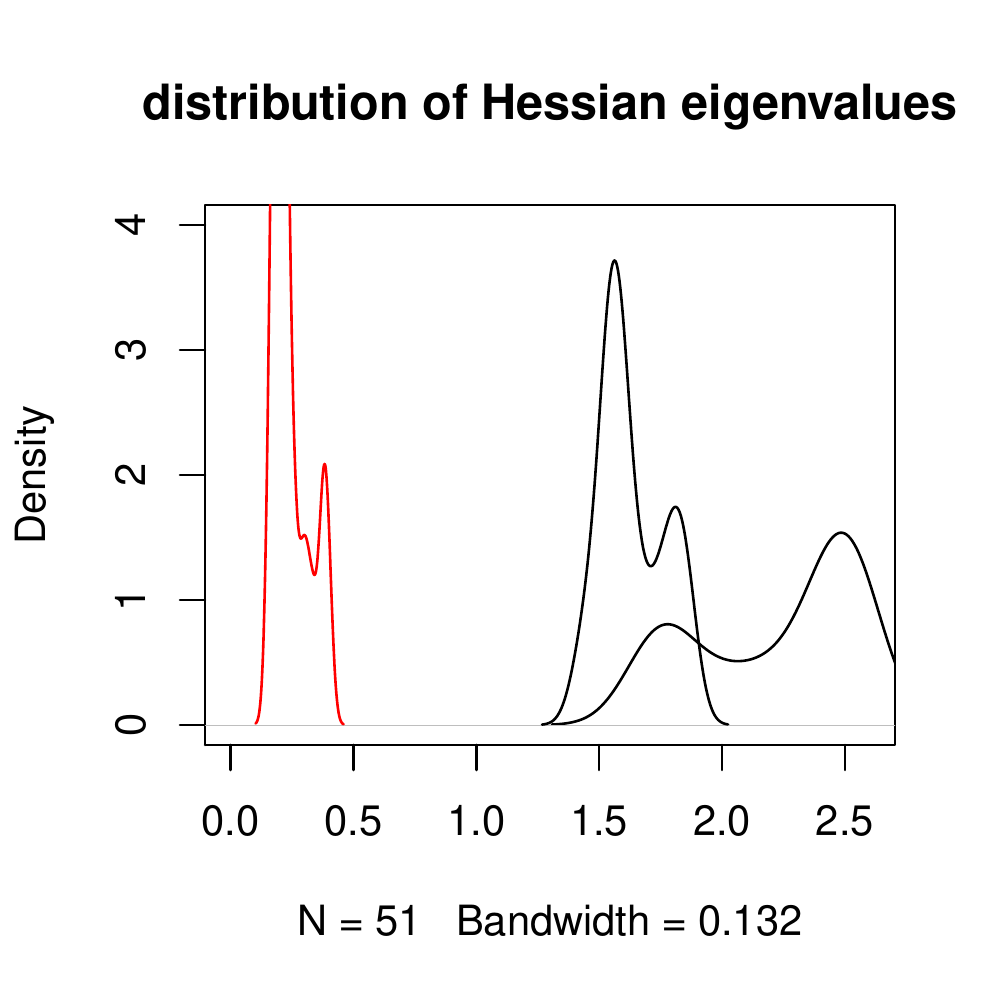}
	\caption{Figure illustrating singular values of the Hessian of eigenfunctions at the boundary.}
	\label{fig:eigenfun svd}
\end{figure}

\begin{table}[H]
	\begin{tabular}{c|c} 
		$2^{nd}$ Derivative & Estimated value \\ \hline
		$\frac{\partial^2 f}{\partial u_1^2}$ & 0.935 \\
		$\frac{\partial^2 f}{\partial u_2^2}$ & -0.346 \\
		$ \frac{\partial^2 f}{\partial u_1\partial u_2}$ & 0.00 \\
		$\Delta f$ & 0.588 \\
		Predicted $\frac{\partial^2 f}{\partial u_1^2}$ & 0.882
		\end{tabular}
	\caption{Table of mean Hessian values and the prediction of the $2^{nd}$ derivative given the Laplacian showing fair correspondence of the simulated values to the predicted ones on a modestly fine grid.}
	\label{tbl:eigenfun hessian values}
\end{table}

\section{Boundary Behavior of Local linear regression}

As the boundary bias of local linear regression is already well studied, existing results can help determine the boundary conditions for the resulting operator. However, as the local linear regression smoother is not self-adjoint, the behavior of the adjoint must also be determined.
Let $x \in \partial \mcal$ and $\mathbf{u}$ be normal coordinates at $x$ in a neighborhood $\bcal$ containing $B(x, 2h) \cap \mcal$. We again choose $u_1$ to correspond to the tangent vector that is normal to the boundary and pointing inwards so that $u_1 \geq 0$.
We wish to evaluate the boundary behavior
of $L^*_h L_h f(x) = \langle L_h \delta_x, L_h f \rangle$
where $L^*_h$ is the adjoint of $L_h$ and $\delta_x$ is the Dirac delta.
The main part of our proof is to evaluate $L_h \delta_x$.

Similar to the proof for theorem \ref{thm:HLLE boundary}, we first show we can reduce the problem to a univariate linear regression. We do this through an orthogonal basis. Then we use the usual regression equations to actually compute the value.
For each $y \in B(x,h) \cap \mcal$,
denote $\ncal_y = B(y,h) \cap \mcal$ and 
$\tilde{u}^{(0)}_i(\cdot, y) = h^{-1}Vol(\ncal_y)^{-1} (u_i(\cdot)1(\cdot \in \ncal_y) - u_i(y))$.
It is easy to see that by symmetry that there are functions $ \tilde{u}_i(\cdot,y) = \tilde{u}^{(0)}_i(\cdot, y) + \epsilon_i$ which are orthogonal to each other and to the constant function in $L_2(\ncal_y)$ for $i > 1$ and where $\epsilon_i = O(h^2)$. 
Likewise, let $\tilde{u}_1(\cdot, y)$ be similarly defined to be orthogonal to $\tilde{u}_i(\cdot, y)$ for $i > 1$. To generate an orthonormal basis, we must orthogonalize it with respect to the constant function as well. Gram-Schmidt gives $ v_1(\cdot, y) = \tilde{u}_1(\cdot, y)- Vol(\ncal_y)^{-1} \langle \tilde{u}_1(\cdot, y), I(\cdot \in \ncal_y) \rangle I(\cdot \in \ncal_y)$ is  orthogonal to the constant function.
Thus, by evaluating 
\begin{align}
h^2 (L_h \delta_x)(y) &=  \delta_x(y) - Vol(\ncal_y)^{-1} I(x \in \ncal_y) \\
&\quad - \sum_{i=2}^m \|\tilde{u}_i(\cdot, y)\|^{-2} \tilde{u}_i(y,y) \langle \tilde{u}_i(\cdot, y), \delta_x \rangle \\
&\quad - \|v_1(\cdot, y) \|^{-2} v_1(y, y) \langle v_1(\cdot, y), \delta_x \rangle  \\
&= \delta_x(y) - Vol(\ncal_y)^{-1} I(y \in \ncal_x) \\ 
&\quad - Vol(\ncal_y)^{-1}\|v_1(\cdot, y) \|^{-2} \langle \tilde{u}_1(\cdot, y), I(\cdot \in \ncal_y) \rangle v_1(x,y) 
\end{align}
where the inner product is taken with respect to $L_2(\bcal)$. 
Integrating over $\bcal$, the first two terms each have magnitude 1 and cancel each other out. 
The third term may be rewritten
\begin{align}
&- Vol(\ncal_y)^{-1}\|v_1(\cdot, y) \|^{-2} \langle \tilde{u}_1(\cdot, y), I(\cdot \in \ncal_y) \rangle v_1(x,y) \\
&\quad = - \|v_1(\cdot, y) \|^{-2} (\tilde{u}_1(x,y) \mu_y  - \mu_y^2) \\
&\quad =  \|v_1(\cdot, y) \|^{-2} (Vol(\ncal_y)^{-1}h^{-1}u_1(y) \mu_y  + \mu_y^2) \\
\end{align}
where $\mu_y = Vol(\ncal_y)^{-1} \langle \tilde{u}_1(\cdot, y), I(\cdot \in \ncal_y) \rangle > 0$ unless $u_1(y)(1+O(h^2)) \geq h $.

It follows that
\begin{align}
\langle L_h \delta_x,  I(\cdot \in \ncal_x) \rangle &= 
h^{-2}  \langle \Omega(Vol(\ncal_y)^{-1}), I(\cdot \in \ncal_x) \rangle \\
&= \Omega(h^{-2}) \\
&\to \infty \quad \mbox{as $h \to 0$}
\end{align}
By applying this to a Taylor expansion, one concludes that for any continuously differentiable function which is non-zero at $x \in \partial \mcal$, $|(L_h^* f)(x)| \to \infty$ as $h \to 0$.

By Theorem 2.2 in \cite{ruppert1994multivariate}, a point $x$ with distance $dist(x, \partial \mcal) < h$ has boundary bias 
\begin{align}
(L_h f)(x) &= \alpha (\mu_2 \,\, \mu_1)
\int_{D_{x,h}} \left(
\begin{array}{c} 1 \\ u_1 \end{array}
\right) Tr(\hcal f(x) \mathbf{u} \mathbf{u}^T) d\mathbf{u} \nonumber \\
\mu_1 &= \int_{D_{x,h}} u_1 d\mathbf{u}  \nonumber \\
\mu_2 &= \int_{D_{x,h}} u_1^2 d\mathbf{u}  + o(Trace(\hcal f(x)))  \nonumber 
\end{align}
where $\alpha > 0$ is some constant and $D_{x,h}$ is the unit $m$-sphere cut along the plane orthogonal to the first coordinate at $u_1 = dist(x, \partial \mcal) / h$. 
This is a linear function in the Hessian. Furthermore, all odd moments of $u_i$ for $i > 1$ are 0, and their second moments are all equal. It is a linear function of the diagonal of the Hessian and of the form
$(L_h f)(x) = -\beta \frac{\partial^2 f}{\partial u_1^2}(x) + (\Delta f)(x) + o(1)$ for some $\beta \neq 0$.

Thus, the boundary condition that functions must satisfy is
\begin{align}
\beta \frac{\partial^2 f}{\partial u_1^2}(x) &= (\Delta f)(x).
\end{align}

Although it takes the same form as the HLLE boundary condition, the constants are different. However, we do not know of a reason to prefer one boundary condition over the other.

\end{document}